\documentclass{article}


\PassOptionsToPackage{square,numbers}{natbib}

\usepackage[final]{}




\usepackage[utf8]{inputenc} 
\usepackage[T1]{fontenc}    
\usepackage[hidelinks]{hyperref}       
\usepackage{url}            
\usepackage{booktabs}       
\usepackage{amsfonts}       
\usepackage{nicefrac}       
\usepackage{microtype}      
\usepackage[dvipsnames]{xcolor}         

\usepackage{algorithm}
\usepackage[noend]{algpseudocode}
\usepackage[nosumlimits]{amsmath}
\usepackage{amsthm}
\usepackage{bm}
\usepackage{csquotes}
\usepackage{mathtools}
\usepackage{multicol}
\usepackage{physics}
\usepackage{subfig}
\usepackage{tikz}
\usepackage{thmtools, thm-restate}
\usetikzlibrary{arrows}
\usetikzlibrary{bayesnet}

\bibliographystyle{abbrvnat}

\DeclareMathOperator*{\argmin}{arg\,min}

\DeclarePairedDelimiterX{\infdivx}[2]{[}{]}{%
  #1\;\delimsize\|\;#2%
}

\newcommand{\kl}{\mathrm{KL}\infdivx*}

\title{Fearless Stochasticity in Expectation Propagation}

%

\author{%
  Jonathan So \\
  University of Cambridge \\
  \texttt{js2488@cam.ac.uk}
  \And
  Richard E. Turner \\
  University of Cambridge \\
  The Alan Turing Institute
}


\begin{document}

\maketitle

\begin{abstract}
    Expectation propagation (EP) is a family of algorithms for performing approximate inference in probabilistic models. The updates of EP involve the evaluation of moments---expectations of certain functions---which can be estimated from Monte Carlo (MC) samples. However, the updates are not robust to MC noise when performed naively, and various prior works have attempted to address this issue in different ways. In this work, we provide a novel perspective on the moment-matching updates of EP; namely, that they perform natural-gradient-based optimisation of a variational objective. We use this insight to motivate two new EP variants, with updates that are particularly well-suited to MC estimation. They remain stable and are most sample-efficient when estimated with just a single sample. These new variants combine the benefits of their predecessors and address key weaknesses. In particular, they are easier to tune, offer an improved speed-accuracy trade-off, and do not rely on the use of debiasing estimators. We demonstrate their efficacy on a variety of probabilistic inference tasks.
\end{abstract}
\section{Introduction}
\label{sec:introduction}

Expectation propagation (EP) \citep{minka2001family,opper2000gp} is a family of algorithms that is primarily used for performing approximate inference in probabilistic models \citep{bui2016dgp,bui2017sparsegp,chu2005ordinalgp,cunningham2008gpppm,gonzalez2016glasses,hennig2009maximum,hennig2010gametrees,hennig2012entropysearch,herbrich2006trueskill,hernandezlobato2016ep,hernandezlobato2015pbp,heskes2002ep,hojensorensen2002ica,jitkrittum2015kerneljitep,jylanki2011robustgp,li2015sep,minka2002gam,minka2018trueskill2,naishguzman2007fitc,nodelman2005ep,opper2008improvingep,seeger2011ep,turner2011epdemodulation,vehtari2020,welling2007sensors,xu2014sms,zoeter2005quadratureep}, although it can be used more generally for approximating certain kinds of functions and their integrals \citep{cunningham2013gaussianep}.

EP involves the evaluation of moments---expectations of certain functions---under distributions that are derived from the model of interest. EP is usually applied to models for which these moments have convenient closed-form expressions or can be accurately estimated using deterministic methods. Moments can also be estimated using Monte Carlo (MC) samples, significantly expanding the set of models EP can be applied to. However, the updates of EP are not robust to MC noise when performed naively, and various prior works have attempted to address this issue in different ways \citep{hasenclever2017,vehtari2020,xu2014sms}.

In this work we provide a novel perspective on the moment-matching updates of EP; namely, that they perform natural-gradient-based optimization of a variational objective (Section \ref{sec:algorithms}). We use this insight to motivate two new EP variants, EP-$\eta$ (Section \ref{subsec:methods_epeta}) and EP-$\mu$ (Section \ref{subsec:methods_epmu}), with updates that are particularly well-suited to MC estimation, remaining stable and being most sample-efficient when estimated with just a single sample. These new variants combine the benefits of their predecessors and address key weaknesses. In particular, they are easier to tune, offer an improved speed-accuracy trade-off, and do not rely on the use of debiasing estimators. We demonstrate their efficacy on a variety of probabilistic inference tasks (Section \ref{sec:evaluation}).



\section{Background}
\label{sec:background}


In this section, we first introduce the problem setting. We then give an overview of EP, followed by a discussion of issues related to sampled estimation of EP updates.

Let $\mathcal{F}$ be the tractable, minimal exponential family of distributions (see Appendix \ref{app:expfam}), defined by the statistic function $s(.)$ with respect to base measure $\nu(.)$. Let $\Omega$, $\mathcal{M}$, $A(.)$ denote the natural domain, mean domain and log-partition function of $\mathcal{F}$, respectively, and $A^*(.)$ the convex dual of $A(.)$.

Let $p_0$ be the member of $\mathcal{F}$ with natural parameter $\eta_0$, so that $p_0(z) = \exp\bm{(}\eta_0^\top s(z) - A(\eta_0)\bm{)}$\footnote{We use e.g. $p$ to refer to a distribution, and $p(.)$ or $p(z)$ for its density, throughout.}, and assume that a distribution of interest $p^*$, the \textit{target distribution}, has a density of the form
\begin{align}
    p^*(z) &\propto p_0(z)\prod_{i=1}^m\exp\big(\ell_i(z)\big). \label{eqn:targetdist}
\end{align}
In Bayesian inference settings, $p^*$ would be the posterior distribution over parameters $z$ given some observed data $\mathcal{D}$, where $p_0$ may correspond to a prior distribution, and $\{\ell_i(z)\}_i$ to log-likelihood terms given some partition of $\mathcal{D}$.\footnote{Going forward, we assume $i$ and $j$ to be taken from the index set $\{1, \ldots, m\}$ unless otherwise specified.} However, we consider the more general setting in which the target distribution is simply a product of factors. Note that we can assume form \eqref{eqn:targetdist} without loss of generality if we allow $p_0$ to be improper. The inference problem typically amounts to computing quantities derived from the normalised density $p^*(z)$, such as samples, summary statistics, or expectations of given functions. In some special cases, these quantities can be computed exactly, but this is not feasible in general, and approximations must be employed.

\subsection{Expectation propagation (EP)}


Given a target distribution density of form \eqref{eqn:targetdist}, EP aims to find an approximation $p \in \mathcal{F}$ such that
\begin{align}
    p(z) &\propto p_0(z)\prod_i\exp\big(\lambda_i^\top s(z)\big) \approx p^*(z) \label{eqn:approxdist}.
\end{align}
 By assumption $\mathcal{F}$ is tractable, and so provided that $\smash{(\eta_0 + \sum_i\lambda_i) \in \Omega}$, $p$ is a tractable member of $\mathcal{F}$. Each factor $\smash{\exp\bm{(}\lambda_i^\top s(z)\bm{)}}$ is known as a \textit{site potential}, and can be roughly interpreted as a $\mathcal{F}$-approximation to the $i$-th \textit{target factor}, $\exp\bm{(}\ell_i(z)\bm{)}$.\footnote{This interpretation is not precise, since it is not necessarily the case that $\lambda_i \in \Omega$.} $\lambda_i$ is known as the $i$-th \textit{site parameter}.

\paragraph{Variational problem} EP, and several of its variants \citep{hasenclever2017,heskes2002ep,minka2001family}, can be viewed as solving a variational problem which we now introduce following the exposition of \citet{hasenclever2017}.

Let the $i$-th \textit{locally extended family}, denoted $\mathcal{F}_i$, be the exponential family defined by the statistic function $s_i(z) = (s(z), \ell_i(z))$ with respect to base measure $\nu(.)$. Let $\Omega_i, \mathcal{M}_i$ and $A_i(.)$ denote the natural domain, mean domain, and log partition function of $\mathcal{F}_i$, respectively. Note that a member of $\mathcal{F}_i$ roughly corresponds to a distribution whose density is the (normalised) product of a member of $\mathcal{F}$ with the $i$-th target factor raised to a power. Unlike $\mathcal{F}$, we do not assume $\mathcal{F}_i$ is minimal.

Fixed points of EP correspond to the solutions of the saddle-point problem
\begin{gather}
    \max_{\theta \in \Omega}  \min_{\{\lambda_i\}_i} L(\theta, \lambda_1, \ldots, \lambda_m),\text{ where }\nonumber\\
    L(\theta, \lambda_1, \ldots, \lambda_m) = A\Big(\eta_0 + \sum_i\lambda_i\Big) + \sum_i\beta_i\big[A_i\bm{(}(\theta - \beta_i^{-1}\lambda_i, \beta_i^{-1})\bm{)} - A(\theta)\big].\label{eqn:epvariationalproblem}
\end{gather}
At a solution to \eqref{eqn:epvariationalproblem}, the EP approximation is given by \eqref{eqn:approxdist}. The hyperparameters $\{\beta_i\}_i$ control the characteristics of the approximation, and correspond to the power parameters of power EP.

\paragraph{EP updates} EP \citep{minka2001family,opper2000gp}, power EP \citep{minka2004powerep} and double-loop EP \citep{heskes2002ep}, can all be viewed as alternating between some number ($\geq 1$) of \emph{inner updates} to decrease $L$ with respect to $\{\lambda_i\}_i$, with an \emph{outer update} to increase $L$ with respect to $\theta$ \citep{hasenclever2017,heskes2003ep}. EP is not typically presented in this way, but by doing so we will be able to present a unified algorithm that succinctly illustrates the relationship between the different variants. We show equivalence with the conventional presentation of EP in Appendix \ref{app:conventionalep}. The inner and outer updates are given by
\begin{align}
    \textbf{Inner update:}\quad & \lambda_i \leftarrow \lambda_i - \alpha\Big(\eta_0 + \sum_j\lambda_j - \nabla A^*\bm{(}\mathbb{E}_{p_i(z)}[s(z)]\bm{)}\Big), \label{eqn:ep_inner_update} \\
    \textbf{Outer update:}\quad & \theta \leftarrow \eta_0 + \sum_j \lambda_j,\label{eqn:ep_outer_update}
\end{align}
where $p_i \in \mathcal{F}_i$ denotes the $i$-th \textit{tilted distribution}, with density
\begin{align}
    p_i(z) &\propto \exp\left((\theta - \beta_i^{-1}\lambda_i)^\top s(z) + \beta_i^{-1}\ell_i(z)\right) \label{eqn:tilted_distribution}.
\end{align}
The hyperparameter $\alpha$ controls the level of damping, which is used to aid convergence. An undamped inner update (with $\alpha=1$) can be seen as performing \emph{moment matching} between $p$ and $p_i$. The expectation in \eqref{eqn:ep_inner_update} is most often computed analytically, or estimated using deterministic numerical methods. It can also be estimated by sampling from $p_i$, however, we will later show that the resulting stochasticity can lead to a biased and unstable procedure. The inner updates can be performed either serially or in parallel (over $i$). In this work we assume they are applied in parallel, but the ideas presented can easily be extended to the serial case. \citet{heskes2003ep} showed that \eqref{eqn:ep_inner_update} follows a decrease direction in $L$ with respect to $\lambda_i$, and so is guaranteed to decrease $L$ when $\alpha$ is small enough. See Appendix \ref{app:epupdates} for a derivation of the inner update. When $\{\lambda_i\}_i$ are at a partial minimum of $L$ (for fixed $\theta$), the outer update performs an exact partial maximisation of the primal form of the variational problem -- see \citet{hasenclever2017} for details. All of the EP variants considered in this paper differ only in their handling of the inner minimisation, and this is our focus.

\paragraph{Unified EP algorithm} The double-loop EP algorithm of \citet{heskes2002ep} repeats the inner update to convergence before performing each outer update, which ensures convergence of the overall procedure. The usual presentation of EP combines \eqref{eqn:ep_inner_update} and \eqref{eqn:ep_outer_update} into a single update in $\lambda_i$ (see Appendix \ref{app:conventionalep}), however, \citet{jylanki2011robustgp} observed that it can also be viewed as performing double-loop EP with just a single inner update per outer update (which is not guaranteed to converge in general). By taking this view, we are we are able to present EP, power EP, and their double-loop counterparts as a single algorithm, presented in Algorithm \ref{alg:ep}. We do so primarily to illustrate how these variants are related to one another, and to the new variants of Section \ref{sec:algorithms}.

\begin{algorithm}[h]
    \caption{EP ({\color{orange}$\beta_i$=$1$},{\color{blue}$n_\text{inner}$=$1$}), power EP ({$\color{orange}\beta_i{\neq} 1$},{\color{blue}$n_\text{inner}$=$1$}), and their double-loop variants ({\color{blue}$n_\text{inner}$>$1$})}
    \label{alg:ep}
    \begin{algorithmic}
        \Require $\mathcal{F}$, $\eta_0$, $\{{\color{orange}\beta_i}\}_i$, $\{\ell_i(z)\}_i$, $\{\lambda_i\}_i$, ${\color{blue}n_\text{inner}}$, $\alpha$
        \While{not converged}
            \State $\smash{\theta \leftarrow \eta_0 + \cramped{\sum}_j \lambda_j}$
            \For{$1$ to ${\color{blue}n_\text{inner}}$}
                \For{$i=1$ to $m$ \emph{in parallel}} 
                    \begin{tikzpicture}[overlay]
                        \node at (5.2, 1.) [draw, rectangle, fill=red!20, align=left, text width=6cm, thick]
                        {Stochastic estimation of this expectation can lead to biased and unstable updates.};
                        \draw[black, thick, -stealth] (3, .53) -- (2.5, -.1);
                    \end{tikzpicture}
                    \State $\smash{\lambda_i \leftarrow \lambda_i - \alpha(\eta_0 + \cramped{\sum}_j\lambda_j - \nabla A^*\bm{(}{\color{red}\mathbb{E}_{p_i(z)}[s(z)]}\bm{)})}$
                \EndFor
            \EndFor
        \EndWhile \\
        \Return{$\{\lambda_i\}_i$}
    \end{algorithmic}
\end{algorithm}

Note that dependence on $\{{\color{orange}\beta_i}\}_i$ comes through the definition of $p_i(.)$. We can think of \enquote{exact} double-loop EP as corresponding to ${\color{blue}n_\text{inner} = \infty}$, with the inner loop exiting once some convergence criterion has been satisfied. In practice, a truncated inner loop is often used by setting ${\color{blue}n_\text{inner}}$ to some small number \citep{hasenclever2017,jylanki2011robustgp}. Upon convergence of Algorithm \ref{alg:ep}, the approximation $p(z) \approx p^*(z)$ is given by \eqref{eqn:approxdist}. Going forwards we will refer to the family of algorithms encompassed by Algorithm \ref{alg:ep} simply as EP.

\paragraph{Stochastic moment estimation} EP is typically applied to models for which the updates either have closed-form expressions, or can be accurately estimated using deterministic numerical methods. However, as update \eqref{eqn:ep_inner_update} only depends on the target distribution through expectations under the tilted distributions, this suggests that updates could also be performed using \textit{sampled} estimates of those expectations. By using MC methods to estimate the tilted distribution moments, EP can be used in a black-box manner, dramatically expanding the set of models it can be applied to. Instead of performing a single large sampling task---as would be required by applying MC methods directly---EP can instead solve several simpler ones, gaining significant computational advantages \citep{hasenclever2017,vehtari2020,xu2014sms}. Unfortunately, when performed naively, update \eqref{eqn:ep_inner_update} is not robust to MC noise. This is because the moment estimates are  converted to the natural (site) parameter space by mapping through $\nabla A^*(.)$, which is not linear in general, and so MC noise in the estimates leads to biased updates of $\lambda_i$. 

\section{Fearlessly stochastic EP algorithms}
\label{sec:algorithms}

In this section we show an equivalence between the moment-matching updates of EP and natural gradient descent (NGD). We use this view to motivate two new EP variants which have several advantages when updates are estimated using samples. We conclude with a review of related work.

\subsection{Natural gradient view of EP}

Several EP variants---EP \citep{opper2000gp,minka2001family}, power EP, double loop EP \citep{minka2004powerep}, stochastic natural gradient EP \citep{hasenclever2017}, and the new ones to follow---differ only in how they perform the \emph{inner} optimisation in $\eqref{eqn:epvariationalproblem}$ with respect to $\{\lambda_i\}_i$. The optimisation can be viewed as one with respect to the $m+1$ distributions $p, p_1, \ldots, p_m$, which are jointly parameterised by $\{\lambda_i\}_i$ \citep{hasenclever2017}. Natural gradient descent (NGD) \citep{amari1998natgrad}, which involves preconditioning a gradient with the inverse Fisher information matrix (FIM) of a distribution, is an effective method for optimising parameters of probability distributions. It would therefore seem desirable to apply NGD to the inner optimisation, yet doing so is not straightforward. While the FIM can be naturally extended to the product of statistical manifolds that is the solution space, computing and inverting it will not generally be tractable, making (the natural extension of) NGD in this space infeasible.

Consider instead $\cramped{\tilde{p}_i}^{(t)}\in \mathcal{F}$ with natural parameter $\smash{\cramped{\eta_i}^{(t)}(\lambda_i) = \eta_0 + \sum_j\cramped{\lambda_j}^{(t)} + \alpha^{-1}(\lambda_i - \cramped{\lambda_i}^{(t)})}$. The $t$ superscript on the site parameters indicates that they are fixed for the current iteration, and so the distribution is fully parameterised by $\lambda_i$. Note that when $\smash{\lambda_j = \cramped{\lambda_j}^{(t)}\ \forall\ j}$, we have $\smash{\cramped{\tilde{p}_i}^{(t)} = p}$. Proposition \ref{prop:ngd_equivalence}, which we prove in Appendix \ref{app:epnatgrads}, states that the moment-matching updates of EP can be viewed as performing NGD in $L$ with respect to \emph{mean parameters} of $\smash{\cramped{\tilde{p}_i}^{(t)}}$.

We will make use of the following properties: for an exponential family with log partition function $A(.)$, the gradient $\nabla A(.)$ provides the map from natural to mean parameters, and this mapping is one-to-one for minimal families, with the inverse given by $\nabla A^*(.)$. See Appendix \ref{app:expfam} for details.


 \begin{restatable}{proposition}{ngdprop}
    \label{prop:ngd_equivalence}
    For $\alpha > 0$, the moment-matching update of EP \eqref{eqn:ep_inner_update} is equivalent to performing an NGD step in $L$ with respect to the mean parameters of $\cramped{\tilde{p}_i}^{(t)}$ with step size $\alpha^{-1}$. That is, for $\mu_i = \mathbb{E}_{\cramped{\tilde{p}_i}^{(t)}(z)}[s(z)]$, and $\smash{\cramped{\tilde{F}_i}^{(t)}(\mu_i)}$ the FIM of $\cramped{\tilde{p}_i}^{(t)}$ with respect to $\mu_i$, we have
    \begin{align}
        \mu_i - \alpha^{-1}\Big[\tilde{F}_i^{(t)}(\mu_i)\Big]^{-1}\pdv{L}{\mu_i} &= \Big(\nabla A\circ \eta_i^{(t)}\Big)\bigg(\lambda_i - \alpha\Big(\eta_0 + \sum_j\lambda_j - \nabla A^*\bm{(}\mathbb{E}_{p_i(z)}[s(z)]\bm{)}\Big)\bigg). \label{eqn:ngd_update_mu} 
    \end{align}
\end{restatable}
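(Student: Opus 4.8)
The plan is to evaluate both sides of \eqref{eqn:ngd_update_mu} at the current iterate, where $\lambda_j = \lambda_j^{(t)}$ for all $j$ so that $\tilde{p}_i^{(t)} = p$ and $\mu_i = \nabla A(\theta_0 + \sum_j \lambda_j)$, and to show that each side reduces to the tilted moment $\mathbb{E}_{p_i(z)}[s(z)]$. (One checks quickly that the two maps only coincide at this point unless $\alpha = 1$, which is expected of an ``update equals update'' statement.) Exhibiting the NGD step and the EP moment-matching step as two routes to the same target also explains the ``moment matching'' terminology: both send the mean parameter of $\tilde{p}_i^{(t)}$ onto the tilted moment.

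First I would identify the Fisher matrix. Invoking the standard exponential-family duality---that the FIM of a minimal family in its natural parametrisation is $\nabla^2 A$, and that the Fisher metric transforms covariantly under the bijection $\eta \mapsto \mu = \nabla A(\eta)$---gives $\tilde{F}_i^{(t)}(\mu_i) = \nabla^2 A^*(\mu_i) = [\nabla^2 A(\eta_i^{(t)})]^{-1}$, hence $[\tilde{F}_i^{(t)}(\mu_i)]^{-1} = \nabla^2 A(\eta_i^{(t)})$. Next I would compute $\pdv{L}{\mu_i}$ by the chain rule along $\mu_i \mapsto \eta_i^{(t)} = \nabla A^*(\mu_i) \mapsto \lambda_i$: the first map contributes the Jacobian $\nabla^2 A^*(\mu_i) = [\nabla^2 A(\eta_i^{(t)})]^{-1}$, while the affine map $\eta_i^{(t)}(\lambda_i) = \theta_0 + \sum_j \lambda_j^{(t)} + \alpha^{-1}(\lambda_i - \lambda_i^{(t)})$ has Jacobian $\alpha^{-1}I$, so its inverse contributes $\alpha I$. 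This yields $\pdv{L}{\mu_i} = \alpha[\nabla^2 A(\eta_i^{(t)})]^{-1}\pdv{L}{\lambda_i}$. Premultiplying by $\alpha^{-1}[\tilde{F}_i^{(t)}(\mu_i)]^{-1} = \alpha^{-1}\nabla^2 A(\eta_i^{(t)})$, the Hessians and the two factors of $\alpha$ cancel, leaving the clean identity $\alpha^{-1}[\tilde{F}_i^{(t)}(\mu_i)]^{-1}\pdv{L}{\mu_i} = \pdv{L}{\lambda_i}$. The left-hand side of \eqref{eqn:ngd_update_mu} is therefore just $\mu_i - \pdv{L}{\lambda_i}$.

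It then remains to evaluate the ordinary gradients. Differentiating $L$ directly in $\lambda_i$, and using that the derivative of $A_i$ with respect to the block of its natural parameter conjugate to $s(z)$, evaluated at $(\theta_i - \beta_i^{-1}\lambda_i, \beta_i^{-1})$, returns $\mathbb{E}_{p_i(z)}[s(z)]$, the $\beta_i$ factors cancel and I obtain $\pdv{L}{\lambda_i} = \nabla A(\theta_0 + \sum_j \lambda_j) - \mathbb{E}_{p_i(z)}[s(z)]$. Since $\mu_i = \nabla A(\theta_0 + \sum_j \lambda_j)$ at the current iterate, the left-hand side collapses to $\mathbb{E}_{p_i(z)}[s(z)]$. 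For the right-hand side, substituting the EP update into $\eta_i^{(t)}$ makes the inner factor $\alpha^{-1}$ cancel the step's factor $-\alpha$, collapsing $\eta_i^{(t)}$ to $\nabla A^*(\mathbb{E}_{p_i(z)}[s(z)])$; applying $\nabla A$ and using $\nabla A \circ \nabla A^* = \mathrm{id}$ for minimal families yields $\mathbb{E}_{p_i(z)}[s(z)]$ as well, so the two sides agree.

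The main obstacle, I expect, will be the Fisher-matrix identification rather than the algebra: carefully justifying the covariant transformation of the Fisher metric under the natural-to-mean reparametrisation to obtain $[\tilde{F}_i^{(t)}(\mu_i)]^{-1} = \nabla^2 A(\eta_i^{(t)})$, and verifying from the definitions of $\mathcal{F}_i$ and the tilted density \eqref{eqn:tilted_distribution} that the relevant gradient of $A_i$ is indeed the tilted expectation $\mathbb{E}_{p_i(z)}[s(z)]$. Once these two exponential-family facts are in place, the remaining steps are routine chain-rule and cancellation.
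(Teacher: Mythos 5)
Your proposal is correct and follows essentially the same route as the paper's proof in Appendix~\ref{app:epnatgrads}: reduce the left-hand side to $\mu_i - \pdv{L}{\lambda_i}$ via the chain rule through $\mu_i \mapsto \eta_i \mapsto \lambda_i$ (with $\pdv{\lambda_i}{\eta_i} = \alpha I$ cancelling the $\alpha^{-1}$ and the Fisher matrix cancelling the Jacobian $\pdv{\eta_i}{\mu_i}$), then show both sides equal $\mathbb{E}_{p_i(z)}[s(z)]$ at the pre-update iterate where $\tilde{p}_i^{(t)} = p$. The only cosmetic difference is that you make the identification $\tilde{F}_i^{(t)}(\mu_i) = \nabla^2 A^*(\mu_i)$ and the gradient of $A_i$ explicit, which the paper leaves to its exponential-family background section.
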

Note that the right hand side of \eqref{eqn:ngd_update_mu} just maps update \eqref{eqn:ep_inner_update} to mean parameters of $\smash{\cramped{\tilde{p}_i}^{(t)}}$. \footnote{Note the apparent contradiction that by decreasing $\alpha$ (increasing the damping) we actually \emph{increase} the NGD step size. This is resolved by observing that the definition of $\tilde{p}^{(t)}_i$ also changes with $\alpha$.} We discussed in Section \ref{sec:background} that noise in the moment estimates results in bias in the update to $\lambda_i$ of EP, due to the noise being passed through the nonlinear map $\nabla A^*(.)$. This alone would not be a problem if the updates followed unbiased descent direction estimates in some other fixed parameterisation. Let $\eta_i = \nabla A^*(\mu_i)$, then, from the definition of $L$ and $\mu_i$, we have
\begin{align}
    \pdv{L}{\mu_i} &= \pdv{\eta_i}{\mu_i}\pdv{\lambda_i}{\eta_i}\Big(\nabla A\big(\eta_0 + \sum_j \lambda_j\big) - \mathbb{E}_{p_i(z)}[s(z)]\Big),
\end{align}
and so when $\mathbb{E}_{p_i(z)}[s(z)]$ is estimated with unbiased samples, the NGD update \eqref{eqn:ngd_update_mu} in $\mu_i$ is also unbiased. However, a \textit{sequence} of updates introduces bias. This is because the parameterisation changes from one update to the next, with the mean parameters of $\smash{\cramped{\tilde{p}_i}^{(t)}}$ mapped to those of $\smash{\cramped{\tilde{p}_i}^{(t+1)}}$ by
\begin{align}
    \mu_i^{(t+1)} &= \Big(\nabla A\circ \eta_i^{(t+1)} \circ \big(\eta_i^{(t)}\big)^{-1} \circ \nabla A^{*}\Big)\big(\mu_i^{(t)}\big), \label{eqn:mean_parameter_map}
\end{align}
where $\smash{\cramped{(\cramped{\eta_i}^{(t)})}^{-1}(.)}$ is the inverse of $\smash{\cramped{\eta_i}^{(t)}(.)}$, and $\smash{\cramped{\eta_i}^{(t+1)}(.)}$ is defined using the site parameters \emph{after} the update at time $t$. This map is not linear in general, and so an unbiased, noisy update of $\mu_i$ at one time step, results in bias when mapped to the next. This bias can make the EP updates unstable in the presence of MC noise, leading previous work to use methods, specific to $\mathcal{F}$, for obtaining approximately unbiased \textit{natural} parameter estimates from samples. However, even with these debiasing methods, a relatively large number of samples is still needed in practice \citep{vehtari2020,xu2014sms}. Here we have assumed parallel application of \eqref{eqn:ep_inner_update}, but the bias at site $i$ is induced whenever the parameters for other sites change from one update to the next, which occurs in both parallel and serial settings.

We now use the natural gradient interpretation of the EP updates to motivate two new variants that are far more robust to MC noise. In particular, they are stable, and most sample-efficient, when updates are estimated with just a single sample. Furthermore, unlike methods that rely on deibasing estimators, they do not require sample thinning (see Section \ref{subsec:relatedwork}). This largely eliminates two hyperparameters for the practitioner. While practical considerations, such as ensuring efficient use of parallel hardware, may justify using more than one sample per update, this can in principle be tuned \textit{a priori}, much like the batch size hyperparameter of stochastic gradient descent.

\subsection{EP-\texorpdfstring{$\boldsymbol{\eta}$}{eta}}
\label{subsec:methods_epeta}

We showed in the previous section that bias is introduced into the sequence of NGD updates due to a nonlinear map from one parameterisation to the next. If the map were affine, the sequence would remain unbiased. This can be achieved by performing NGD with respect to $\eta_i$, the \textit{natural} parameters of $\smash{\cramped{\tilde{p}_i}^{(t)}}$. Then, the map from one parameterisation to the next (by equating $\lambda_i$) is given by $\smash{\cramped{\eta_i}^{(t+1)} \circ \cramped{(\cramped{\eta_i}^{(t)})}^{-1}}$, which is linear. The NGD direction in $L$ with respect to $\smash{\cramped{\tilde{p}_i}^{(t)}}$ and $\eta_i$ is given by
\begin{align}
    -\Big[\tilde{F}_i(\eta_i)\Big]^{-1}\pdv{L}{\eta_i} &= -\alpha\pdv{\eta_i}{\mu_i}\Big(\nabla A\big(\eta_0 + \sum_j \lambda_j\big) - \mathbb{E}_{p_i(z)}[s(z)]\Big).
\end{align}
See Appendix \ref{app:epetaupdate} for a derivation. Note that $\alpha$ plays a similar role as the NGD step size in this parameterisation, but it also affects $\smash{\cramped{\tilde{p}_i}^{(t)}}$, and so to obtain a more faithful NGD interpretation we fix $\alpha=1$ and introduce an explicit NGD step size $\epsilon$ to decouple from the effect on $\smash{\cramped{\tilde{p}_i}^{(t)}}$. The resulting update can be expressed directly as an update in $\lambda_i$, by applying $\smash{\cramped{(\cramped{\eta_i}^{(t)})}^{-1}}$ to the updated $\eta_i$, giving
\begin{align}
    \lambda_i &\leftarrow \lambda_i - \epsilon\pdv{\eta_i}{\mu_i}\Big(\nabla A\big(\eta_0 + \sum_j \lambda_j\big) - \mathbb{E}_{p_i(z)}[s(z)]\Big). \label{eqn:ep_eta_update}
\end{align}
We can use automatic differentiation to efficiently compute \eqref{eqn:ep_eta_update}, by recognising the second term as a Jacobian-vector product (JVP) with respect to $\eta_i(\mu_i) = \nabla A^*(\mu_i)$.\footnote{This can be computed using either forward \textit{or} reverse mode automatic differentiation, as $\pdv{\eta_i}{\mu_i}$ is symmetric.} In summary, by performing NGD with respect to \emph{natural} parameters of $\smash{\cramped{\tilde{p}_i}^{(t)}}$, instead of mean parameters, the resulting sequence of updates is unbiased in $\lambda_i$. We call the resulting procedure---which is given in Algorithm \ref{alg:ep_etamu}---EP-$\eta$, to emphasise that we have simply changed the NGD parameterisation of EP. The unbiased updates of EP-$\eta$ allow it to be more sample-efficient than EP by using a smaller number of samples per iteration. The left panel of Figure \ref{fig:clutter} demonstrates this effect in a stochastic version of the clutter problem \citep{minka2001family}.

\subsection{EP-\texorpdfstring{$\boldsymbol{\mu}$}{mu}}
\label{subsec:methods_epmu}

\begin{figure}
    \centering
    \includegraphics[]{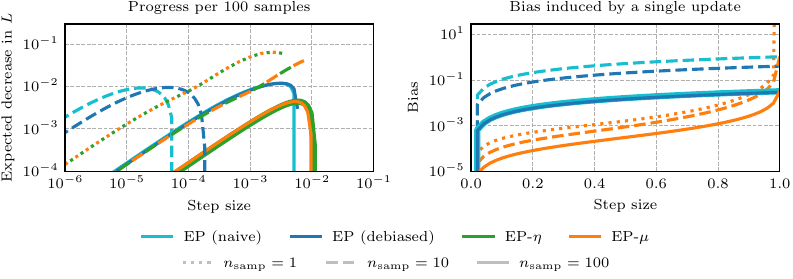}
    
    \caption{The effect of step size ($\alpha$ or $\epsilon$) and number of MC samples ($n_\text{samp}$) on different EP variants in a stochastic version of the clutter problem of \citet{minka2001family}. EP (naive) uses maximum likelihood estimation for the updates, and EP (debiased) uses the estimator of \citet{xu2014sms}. Step size corresponds to $\alpha$ for EP, and $\epsilon$ for EP-$\mu$ and EP-$\eta$. Only EP-$\eta$ and EP-$\mu$ can perform $1$-sample updates, hence the other traces are not visible. The left panel shows the expected decrease in $L$ after $100/n_\text{samp}$ steps. Performing e.g. $100\times$ $1$-sample steps, or $10\times$ $10$-sample steps, achieves a much larger decrease in $L$ than a single $100$-sample step. The right panel shows the magnitude of the bias in $\lambda_i$ after a single parallel update, averaged over all sites and dimensions. The bias of EP-$\mu$ shrinks far faster as the step size decreases than that of EP. EP-$\eta$ is always unbiased and so is not visible.}
    \label{fig:clutter}
\end{figure}

The bias introduced by the updates of EP cannot be mitigated by reducing $\alpha$ (increasing the damping) without also proportionally sacrificing the amount of progress made by each update. More concretely, let the bias in dimension $d$ of $\smash{\cramped{\mu_i}^{(t+1)}}$, after an update at time $t$, be defined as $\smash{\mathbb{E}[\cramped{\mu_i}^{(t+1)} - \cramped{\bar{\mu}_i}^{(t+1)}]_d}$, where $\smash{\cramped{\bar{\mu}_i}^{(t+1)}}$ is the value of $\smash{\cramped{\mu_i}^{(t+1)}}$ after a \emph{noise-free} update, and expectation is taken over the sampling distributions of all parallel updates at time $t$. Then, Proposition \ref{prop:ep_bias} below, which we prove in Appendix \ref{app:epbias}, summarises the effect of decreasing $\alpha$ on EP.

 \begin{restatable}{proposition}{epbiasprop}
    After update \eqref{eqn:ep_inner_update} is executed in parallel over $i$, as $\alpha \rightarrow 0^+$, both the expected decrease in $L$, and the bias $\smash{\mathbb{E}[\cramped{\mu_i}^{(t+1)} - \cramped{\bar{\mu}_i}^{(t+1)}]_d}$, are $O(\alpha)$ for all $d$.
    \label{prop:ep_bias}
\end{restatable}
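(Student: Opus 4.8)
The plan is to reduce both quantities to Taylor expansions in $\alpha$ about $\alpha = 0$ and to show that their leading terms are exactly $O(\alpha)$ while the remainders are $O(\alpha^2)$. First I would fix notation for the single parallel step: write $\theta^{(t)} = \theta_0 + \sum_j \lambda_j^{(t)}$ for the current natural parameter of $p$, let $\bar m_i = \mathbb{E}_{p_i(z)}[s(z)]$ be the true tilted moment (deterministic, since $p_i$ is defined by the fixed current state), and let $\hat m_i$ be its unbiased sample estimate, so $E[\hat m_i] = \bar m_i$. The parallel update reads $\lambda_i^{(t+1)} = \lambda_i^{(t)} - \alpha\, \delta_i$ with $\delta_i = \theta^{(t)} - \nabla A^*(\hat m_i)$, and summing over $i$ gives $\theta^{(t+1)} = \theta^{(t)} - \alpha \sum_j \delta_j$. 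Tracing \eqref{eqn:mean_parameter_map} at the updated site parameters (or directly, since $\eta_i^{(t+1)}(\lambda_i^{(t+1)}) = \theta^{(t+1)}$) yields the clean identity $\mu_i^{(t+1)} = \nabla A(\theta^{(t+1)})$, and likewise $\bar\mu_i^{(t+1)} = \nabla A(\bar\theta^{(t+1)})$ with $\bar\theta^{(t+1)} = \theta^{(t)} - \alpha \sum_j \bar\delta_j$ and $\bar\delta_j = \theta^{(t)} - \nabla A^*(\bar m_j)$.

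The crucial observation is that $E[\delta_i] \neq \bar\delta_i$: because $\nabla A^*$ is nonlinear, a second-order expansion of $\nabla A^*$ about $\bar m_i$ together with $E[\hat m_i] = \bar m_i$ gives $b_i := E[\nabla A^*(\hat m_i)] - \nabla A^*(\bar m_i) = O(\mathrm{Var}(\hat m_i))$, a quantity fixed by the curvature of $\nabla A^*$ and the sampling variance, hence \emph{independent of} $\alpha$. Consequently $E[\sum_j \delta_j] - \sum_j \bar\delta_j = -\sum_j b_j$ is a constant in $\alpha$. I would then Taylor expand $\nabla A$ about $\theta^{(t)}$ to obtain $E[\mu_i^{(t+1)}] - \bar\mu_i^{(t+1)} = -\alpha\, \nabla^2 A(\theta^{(t)}) (E[\sum_j \delta_j] - \sum_j \bar\delta_j) + O(\alpha^2) = \alpha\, \nabla^2 A(\theta^{(t)}) \sum_j b_j + O(\alpha^2)$, which is $O(\alpha)$ componentwise, for every $d$. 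The $O(\alpha^2)$ remainder is legitimate because the second-order coefficient involves $E[(\sum_j \delta_j)(\sum_j \delta_j)^\top]$, whose entries are $O(1)$ in $\alpha$ (finite sample covariance plus $O(1)$ means).

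The expected decrease in $L$ follows the same template. Holding $\{\theta_i\}$ fixed during the inner step, $L$ is a smooth function of $\{\lambda_i\}$ with $\partial L / \partial \lambda_i = \nabla A(\theta^{(t)}) - \bar m_i$; Taylor expanding $L$ about $\{\lambda_i^{(t)}\}$ gives $E[L^{(t)} - L^{(t+1)}] = \alpha \sum_i (\partial L/\partial \lambda_i)^\top E[\delta_i] + O(\alpha^2)$, and substituting $E[\delta_i] = \bar\delta_i - b_i$ shows this equals $\alpha$ times an $\alpha$-independent constant plus $O(\alpha^2)$, hence $O(\alpha)$. (In the noise-free case the leading coefficient $\sum_i (\partial L/\partial \lambda_i)^\top \bar\delta_i \geq 0$ by monotonicity of $\nabla A^*$, recovering the known descent property, but for the statement only the order is needed.)

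I expect the main obstacle to be rigour in the remainders rather than the leading-order algebra: I must justify that the higher-order Taylor terms are genuinely $O(\alpha^2)$, which requires $\nabla A$ and $\nabla A^*$ to be smooth on the relevant domain (true in the interior of $\Omega$ and $\mathcal{M}$ for a minimal exponential family) and requires $\hat m_i$ to have finite second moments with $\nabla A^*(\hat m_i)$ of finite mean and variance, so that the covariance terms multiplying $\alpha^2$ stay bounded as $\alpha \to 0^+$. Granting these mild regularity conditions, the matching $O(\alpha)$ orders of the two quantities make precise the intended point that shrinking $\alpha$ cannot suppress the bias any faster than it suppresses progress.
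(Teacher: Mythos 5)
Your proposal is correct and follows essentially the same route as the paper's proof: Taylor-expand the post-update mean parameters in $\alpha$ about $0$, observe that the first-order coefficient contains $E[\nabla A^*(\hat m_j)]-\nabla A^*(\bar m_j)$, which is generically nonzero because $\nabla A^*$ is nonlinear and is independent of $\alpha$, and handle the decrease in $L$ by a first-order expansion along the update direction. Your identity $\mu_i^{(t+1)}=\nabla A(\theta^{(t+1)})$ is just a streamlined form of the paper's composition through the map \eqref{eqn:mean_parameter_map} (the argument of $\nabla A$ in the paper's third display reduces to exactly $\theta_0+\sum_j\lambda_j^{(t+1)}$), so the two derivations coincide.
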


Proposition \ref{prop:ngd_equivalence} at the beginning of this section states that update \eqref{eqn:ep_inner_update} can be viewed as performing an NGD step with respect to $\mu_i$ with a step size of $\alpha^{-1}$. However, changing $\alpha$ also has the effect of changing the definition of $\smash{\cramped{\tilde{p}_i}^{(t)}}$. It is then natural to wonder what happens if we fix $\alpha = 1$ and introduce an explicit step size for NGD, $\epsilon$, resulting in the update
\begin{align}
    \mu_i &\leftarrow \mu_i - \epsilon\Big(\nabla A\big(\eta_0 + \sum_j \lambda_j\big) - \mathbb{E}_{p_i(z)}[s(z)]\Big). \label{eqn:ep_mu_updatemu}
\end{align}
See Appendix \ref{app:epmuupdate} for a derivation. It turns out that in doing so, when we decrease $\epsilon$, the bias shrinks far faster than the expected decrease in $L$.

\begin{restatable}{proposition}{epmubiasprop}
    After update \eqref{eqn:ep_mu_updatemu} is executed in parallel over $i$, as $\epsilon \rightarrow 0^+$, the expected decrease in $L$ is $O(\epsilon)$, and the bias $\smash{\mathbb{E}\left[\cramped{\mu_i}^{(t+1)} - \cramped{\bar{\mu}_i}^{(t+1)}\right]_d}$ is $O(\epsilon^2)$, for all $d$.
    \label{prop:ep_mu_bias}
\end{restatable}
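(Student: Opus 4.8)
The plan is to track the mean parameter of the \emph{post-update} approximation, $\mu_i^{(t+1)} = \nabla A(\theta^{(t+1)})$ with $\theta^{(t+1)} = \theta_0 + \sum_j \lambda_j^{(t+1)}$, as an explicit function of $\epsilon$ and the sampled gradients, then Taylor-expand in $\epsilon$ and take expectations. First I would write the parallel $\mu$-space update \eqref{eqn:ep_mu_updatemu} per site as $\tilde\mu_j = \mu^{(t)} - \epsilon g_j$, where $g_j = \mu^{(t)} - \hat m_j$ and $\hat m_j$ is the (unbiased) sampled estimate of $\mathbb{E}_{p_j}[s(z)]$. Two facts from the re-centering at the start of the iteration are central: all sites share $\mu_j^{(t)} = \mu^{(t)} = \nabla A(\theta^{(t)})$, and the noisy moment $\hat m_j$ enters $g_j$ \emph{linearly}. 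Mapping each $\tilde\mu_j$ back to $\lambda_j^{(t+1)}$ through $(\eta_j^{(t)})^{-1}\circ\nabla A^*$ and summing over $j$ (this is exactly the composition in \eqref{eqn:mean_parameter_map}) yields the closed form $\theta^{(t+1)} = \sum_j \nabla A^*(\mu^{(t)} - \epsilon g_j) - (m-1)\theta^{(t)}$, so all of the $\epsilon$- and noise-dependence is funnelled through $\nabla A^*$ evaluated at $\mu^{(t)} - \epsilon g_j$.

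Next I would expand. Using the inverse-map identity $\nabla^2 A^*(\mu^{(t)}) = [\nabla^2 A(\theta^{(t)})]^{-1} =: G^{-1}$, the first-order term of $\theta^{(t+1)}$ is $-\epsilon\, G^{-1}\sum_j g_j$ (linear in the $g_j$) and the second-order term is $\tfrac{\epsilon^2}{2}\sum_j \nabla^3 A^*(\mu^{(t)})[g_j, g_j]$ (quadratic). Composing with $\nabla A(\theta^{(t)} + \delta) = \mu^{(t)} + G\delta + \tfrac12\nabla^3 A(\theta^{(t)})[\delta,\delta] + O(\epsilon^3)$ and using $G\,G^{-1} = I$ collapses the linear contribution to $-\epsilon\sum_j g_j$, giving $\mu_i^{(t+1)} = \mu^{(t)} - \epsilon\sum_j g_j + \tfrac{\epsilon^2}{2}\big(G\sum_j \nabla^3 A^*[g_j,g_j] + \nabla^3 A[G^{-1}\!\sum_j g_j,\,G^{-1}\!\sum_j g_j]\big) + O(\epsilon^3)$. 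The payoff is that the \emph{only} $O(\epsilon)$ contribution is linear in the $g_j$.

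I would then take expectations and subtract the noise-free iterate $\bar\mu_i^{(t+1)}$ (the same expansion with $g_j \to \bar g_j = \mu^{(t)} - m_j$). Since $\mathbb{E}[g_j] = \bar g_j$, every linear-in-$g_j$ term matches and the $O(\epsilon)$ part of the bias cancels identically; the residual bias arises only from the quadratic terms, whose expectations replace $g_j \otimes g_j$ by $\bar g_j\otimes\bar g_j + \mathrm{Cov}(\hat m_j)$, leaving exactly the covariance contributions carried through $\nabla^3 A^*$ and $\nabla^3 A$. Hence the bias is $O(\epsilon^2)$ for every dimension $d$; parallel independence across sites merely cleans up the cross terms and is not needed for the order. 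This is precisely the point of contrast with Proposition \ref{prop:ep_bias}: there the noisy moment is passed through $\nabla A^*$ \emph{before} being scaled by $\alpha$, so the curvature term is scaled only by $\alpha$ and survives at $O(\alpha)$. For the decrease in $L$, I would use the gradient identity $\partial L/\partial\lambda_i = \bar g_i$ together with $\Delta\lambda_i = \nabla A^*(\tilde\mu_i) - \nabla A^*(\mu^{(t)}) = -\epsilon\,G^{-1} g_i + O(\epsilon^2)$; a first-order expansion gives $\mathbb{E}[\Delta L] = -\epsilon\sum_i \bar g_i^\top G^{-1}\bar g_i + O(\epsilon^2)$, which is $O(\epsilon)$ and a genuine decrease since $G^{-1} \succ 0$.

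The hard part will be the bookkeeping of the multivariate third-order expansions through the composition of nonlinear maps and making the first-order bias cancellation watertight, together with supplying the regularity needed to convert the formal expansions into honest $O(\cdot)$ bounds: smoothness of $A$ and $A^*$ on a neighbourhood of $\theta^{(t)}$ (guaranteed for a minimal exponential family on the interior of its domain) and finite third moments of the MC estimator, so that the remainder expectations exist and the constants are uniform as $\epsilon \to 0^+$.
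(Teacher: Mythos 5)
Your proposal is correct and follows essentially the same route as the paper's proof: both express $\mu_i^{(t+1)}$ by pushing the noisy $\mu$-space updates through the composition $\nabla A\circ\eta_i^{(t+1)}\circ\bigl(\eta_i^{(t)}\bigr)^{-1}\circ\nabla A^{*}$, Taylor-expand in $\epsilon$, and observe that the $O(\epsilon)$ term is linear in the zero-mean noise (because the sampled moment is scaled by $\epsilon$ \emph{before} hitting the nonlinearity $\nabla A^{*}$), so it vanishes in expectation and only $O(\epsilon^2)$ bias survives. The only differences are cosmetic: you carry the expansion explicitly to second order and write out the leading term of $\mathbb{E}[\Delta L]$, where the paper simply absorbs these into $O(\epsilon^2)$ and a one-line Taylor argument, respectively.
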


Proposition \ref{prop:ep_mu_bias}, which we prove in Appendix \ref{app:epbias},  tells us that by using update \eqref{eqn:ep_mu_updatemu}, we can reduce the bias to arbitrarily small levels while still making progress in decreasing $L$. Update \eqref{eqn:ep_mu_updatemu} can also be expressed directly as an update in $\lambda_i$ by applying $\smash{\cramped{(\eta^{(t)})}^{-1}\circ\nabla A^*}$, giving 
\begin{align}
    \lambda_i &\leftarrow \nabla  A^*\Big((1-\epsilon)\nabla A\big(\eta_0 + \sum_j \lambda_j\big) + \epsilon\mathbb{E}_{p_i(z)}[s(z)]\Big) - \eta_0 - \sum_{j\neq i}\lambda_j, \label{eqn:ep_mu_updatelambda}
\end{align}
which has the simple interpretation of performing EP, but with damping of the moments instead of the site (natural) parameters. In summary, if we perform NGD with respect to the same mean parameterisation as EP, but treat the NGD step size as a free parameter, $\epsilon$, we can obtain updates that are \emph{approximately} unbiased while still making progress. We call this variant EP-$\mu$, again to indicate that we are simply performing the NGD update of EP in the mean parameter space.\footnote{EP too is performing NGD in mean parameters, but using a \enquote{step size} that also affects the distribution $\tilde{p}_i$.} The resulting procedure is also given by Algorithm \ref{alg:ep_etamu}.

\begin{algorithm}
    \caption{EP-$\eta$ and EP-$\mu$ (differences with Algorithm \ref{alg:ep} are highlighted in {\color{ForestGreen} green})}
    \label{alg:ep_etamu}
    \begin{algorithmic}
        \Require $\mathcal{F}$, $\eta_0$, $\{\beta_i\}_i$, $\{\ell_i(z)\}_i$, $\{\lambda_i\}_i$, $n_\text{inner}$, {\color{ForestGreen}$\epsilon$}
        \While{not converged}
            \State $\smash{\theta \leftarrow \eta_0 + \cramped{\sum}_j \lambda_j}$
            \For{$1$ to $n_\text{inner}$}
                \For{$i=1$ to $m$ \emph{in parallel}}
                \State {\color{ForestGreen}
                    update $\lambda_i$ using \eqref{eqn:ep_eta_update} for EP-$\eta$, or \eqref{eqn:ep_mu_updatelambda} for EP-$\mu$}
                
                \EndFor
            \EndFor
        \EndWhile \\
        \Return{$\{\lambda_i\}_i$}
    \end{algorithmic}
\end{algorithm}


The computational cost of EP-$\mu$ is lower than that of EP-$\eta$ because it does not require JVPs through $\nabla A^*(.)$.\footnote{See Appendix \ref{app:computationalcost} for a detailed discussion of the computational costs of EP-$\eta$ and EP-$\mu$.} The drawback is that the updates of EP-$\mu$ do still retain some bias, however, we find that the bias of EP-$\mu$ typically has negligible impact on its performance relative to EP-$\eta$. This is evident in the clutter problem of \citet{minka2001family}, as demonstrated in the left panel of Figure \ref{fig:clutter}, as well as in the larger scale experiments of Section \ref{sec:evaluation}. The right panel of Figure \ref{fig:clutter} illustrates how quickly the bias in $\lambda_i$ shrinks as the step size of EP-$\mu$ is decreased. Note that the results of this subsection are stated in terms of bias in $\mu_i$, but it is straightforward to show that equivalent results also hold for $\lambda_i$ using Taylor series arguments.

\subsection{Related work}
\label{subsec:relatedwork}

The link between natural gradients and moment matching in an exponential family is well known \citep{bouchardcote2024mcmc}, but the connection with EP shown here -- which relies on identification of the distribution $\smash{\cramped{\tilde p_i}^{(t)}}$ and parameterisation $\smash{\cramped{\eta_i}^{(t)}}$ -- is new, to the best of our knowledge. \citet{bui2018pvi} showed that the updates of (power) EP are equivalent to performing NGD in a local variational free energy objective when taking the limit of the power parameter as $\beta_i \rightarrow \infty$, and \citet{wilkinson2023bayesnewton} showed that this extends to the (global) variational free energy objective for models with a certain structure. Our result is different in that it relates to NGD of the EP variational objective, and applies for all values of $\beta_i$.

EP was combined with Markov chain Monte Carlo (MCMC) moment estimates by \citet{xu2014sms} in a method called \textit{sampling via moment sharing} (SMS), and later by \citet{vehtari2020} in the context of hierarchical models. In both works, when $\mathcal{F}$ was multivariate normal (MVN) ---arguably the most useful case---the authors used an estimator for the updates that is unbiased when $\tilde p_i$ is also MVN. Although this estimator is only \textit{approximately} unbiased in general, it can help to stabilise the updates significantly. Even so, it is necessary to use a relatively large number of samples for the updates, with several hundred or more being typical. Using many samples per update is inefficient, as the update direction can change from one iteration to the next. Furthermore, such estimators typically rely on a known number of \textit{independent} samples. Samples drawn using MCMC methods are generally autocorrelated, necessitating the use of sample thinning before the estimators can be applied. This adds another hyperparameter to the procedure---the thinning ratio---and it is also inefficient due to the discarding of samples. The practitioner is required to choose the number of samples, the thinning ratio, and the amount of damping, all of which affect the accuracy, stability, and computational efficiency of the procedure. There is no easy way to choose these \textit{a priori}, forcing the practitioner to either set them conservatively (favouring accuracy and stability), or to find appropriate settings by trial and error, both of which are likely to expend time and computation unnecessarily.

The stochastic natural gradient EP (SNEP) method of \citet{hasenclever2017}, which is closely related to our work, also optimises the EP variational objective using a form of NGD. The SNEP updates are unbiased in the presence of MC noise, allowing them to be performed with as little as one sample, without relying on $\mathcal{F}$-specific debiasing estimators, or the sample thinning that would entail. In SNEP, NGD is performed with respect to mean parameters of the \emph{site potentials}, which are treated as bona fide distributions in $\mathcal{F}$. In contrast, we showed that EP can already be viewed as performing NGD, but with respect to the distributions $\smash{\{\cramped{\tilde{p}_i}^{(t)}\}_i}$, and our new variants, EP-$\eta$ and EP-$\mu$, are able to gain the same advantages as SNEP, but using the same distributions for NGD as EP. \citet{hasenclever2017} showed that in some settings, SNEP can obtain accurate point estimates fairly rapidly. However, we find that it typically converges far slower than both EP and our new variants, consistent with findings in \citet{vehtari2020}. We argue that this is because the site potentials (when considered as distributions) bear little resemblance with the distributions that are ultimately being optimised, and so their geometry is largely irrelevant for the optimisation of $L$. In contrast, the geometry of $\smash{\cramped{p_i}^{(t)}}$ is closely related to that of $L$, as we show in Appendix \ref{app:ngdsurrogates}.

We note that \citet{xu2014sms} and \citet{hasenclever2017} also proposed methods for performing updates in an asynchronous fashion, significantly reducing the frequency and cost of communication between nodes in distributed settings. In principle, these methods could be combined with those presented in this paper, but we do not consider them further here.
\section{Evaluation}
\label{sec:evaluation}

In this section we demonstrate the efficacy of EP-$\eta$ and EP-$\mu$ on a variety of probabilistic inference tasks. In each experiment, the task was to perform approximate Bayesian inference of unobserved parameters $z$, given some observed data $\mathcal{D}$. All of the models in these experiments followed the same general structure, consisting of a minimal exponential family prior over $z$, $p_0$, and a partition of the data $\{\mathcal{D}_i\}_i^m$, where each block $\mathcal{D}_i$ depends on both $z$ and an additional vector of \emph{local} latent variables $w_i$ that also depend on $z$. That is, the joint density has the form
\begin{gather}
    p_0(z)\prod_i p(w_i\mid z)p(\mathcal{D}_i\mid w_i, z),
\end{gather} where $p_0 \in \mathcal{F}$. This structure is shown graphically in Appendix \ref{app:evaluationdetails}. To perform approximate inference of $z$, we first define $\ell_i(z)$ as the log likelihood of $\mathcal{D}_i$ given $z$, with $w_i$ marginalised out. That is,
\begin{align}
    \ell_i(z) &= \log\int p(\mathcal{D}_i, w_i\mid z)\mathrm{d}w_i.
\end{align}
Then, given $p_0(z)$ and $\{\ell_i(z)\}_i$, we define $p^*(z)$ as \eqref{eqn:targetdist}, and proceed to find an approximation $p(z) \approx p^*(z)$ using the methods described in earlier sections. Note that sampling from the tilted distribution $p_i(z)$ requires jointly sampling over $z$ and $w_i$ and taking the marginal. EP is a particularly appealing framework for performing inference in this setting, as the dimensionality of the sampled distributions is constant with respect to $m$, mitigating the curse of dimensionality experienced by conventional MCMC approaches \citep{vehtari2020}. In our experiments we used NUTS \citep{hoffman2014nuts} to perform the sampling, consistent with prior work \citep{vehtari2020,xu2014sms}. In each experiment we compared EP-$\eta$ and EP-$\mu$ with EP, in the manner used by \citet{xu2014sms} and \citet{vehtari2020}. When $\mathcal{F}$ was MVN we also compared to SNEP \citep{hasenclever2017}. For the models with normal-inverse Wishart (NIW) $\mathcal{F}$, we were unable to find an initialisation for SNEP that would allow us to perform a meaningful comparison. We discuss this point further in Appendix \ref{app:evaluationdetails}.

To evaluate the performance of the different variants, we monitored KL divergence to an \emph{estimate} of the optimum, obtained by running EP with a large number of samples. We used 500 different hyperparameter settings for each variant, chosen using random search, and repeated each run 5 times using different random seeds for NUTS. All runs of EP-$\eta$ and EP-$\mu$ were performed with $n_\text{samp}=1$ and $n_\text{inner}=1$. The results of these experiments are summarised by the Pareto frontiers in Figure \ref{fig:frontiers}

\begin{figure}[t]
    \centering
    \includegraphics[]{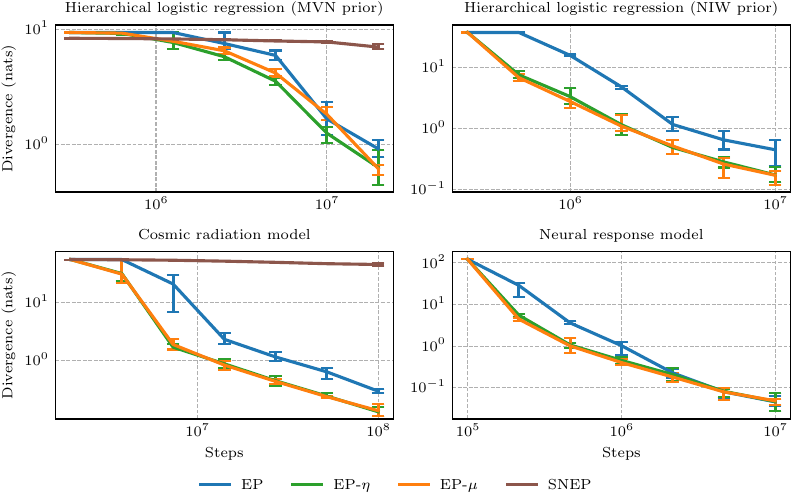}
    
    \caption{Pareto frontiers showing the number of NUTS steps ($x$-axis) against the KL divergence from $p$ to an estimate of the optimum ($y$-axis). Each point on the plot marks the lowest average KL divergence attained by \emph{any} hyperparameter setting by that step count. Error bars mark the full range of values for the marked hyperparameter setting across 5 random seeds.}
    \label{fig:frontiers}
\end{figure}

We see that EP-$\eta$ and EP-$\mu$ can typically reach a given level of accuracy (distance from an estimate of the optimum) faster than EP, often significantly so. We stress that the frontiers for EP-$\eta$ and EP-$\mu$ are traced out with $n_\text{samp}$ and $n_\text{inner}$ each fixed to $1$, with no sample thinning, varying only the step size hyperparameter $\epsilon$, with higher accuracy/cost regions of the frontier corresponding to lower values of $\epsilon$. In contrast, to trace out the frontier of EP we must jointly vary $\alpha$, $n_\text{samp}$, and the thinning ratio. See Appendix \ref{app:hyperparameter_sensitivity} for examples of this effect. The performances of EP-$\eta$ and EP-$\mu$ are very similar. We found SNEP to be significantly slower than the other methods, consistent with findings by \citet{vehtari2020}. Pareto frontiers with respect to wall-clock time can be found in Appendix \ref{app:wallclock_time_results}. Code for these experiments can be found at \url{https://github.com/cambridge-mlg/fearless-ep}. We now provide a brief overview of individual experiments, with further details given in Appendix \ref{app:evaluationdetails}.


\paragraph{Hierarchical logistic regression with MVN prior} Hierarchical logistic regression (HLR) is used to perform binary classification in a number of groups when the extent to which data should be pooled across groups is unknown \citep{gelman2013bda}. We performed approximate Bayesian inference in a HLR model using a MVN prior over the group-level parameters. We applied this model to synthetic data, generated using the same procedure as \citet{vehtari2020}, which was designed to be challenging for EP. In this experiment $z \in \mathbb{R}^{8}$, $w_i \in \mathbb{R}^{4}\ \forall\ i$, and each of the $m=16$ groups had $n=20$ observations. In the upper-left panel of Figure \ref{fig:frontiers} we see that EP-$\eta$ and EP-$\mu$ typically reach a given level of accuracy faster than EP and SNEP.

\paragraph{Hierarchical logistic regression with NIW prior} We also performed approximate Bayesian inference in a similar HLR model, but using a normal-inverse Wishart (NIW) prior over group-level parameters, allowing for correlation of regression coefficients within groups. We applied this model to political survey data, where each of $m=50$ groups corresponded to responses from a particular US state, with $7$ predictor variables corresponding to characteristics of a given survey respondent, so that $w_i \in \mathbb{R}^7\;\forall\;i$. There were $n=97$ survey respondents per state. The data, taken from the 2018 Cooperative Congressional Election Study, related to support for allowing employers to decline coverage of abortions in insurance plans \citep{lopezmartin2022mrp,schaffner2018cces}. The upper-right panel of Figure \ref{fig:frontiers} shows that EP-$\eta$ and EP-$\mu$ consistently reach a given level of accuracy significantly faster than EP.

\paragraph{Cosmic radiation model} A hierarchical Bayesian model was used by \citet{vehtari2020} to capture the nonlinear relationship between diffuse galactic far ultraviolet radiation and 100-µm infrared emission in various sectors of the observable universe, using data obtained from the Galaxy Evolution Explorer telescope. In this model each $w_i\in\mathbb{R}^9$ parameterised a nonlinear regression model using data obtained from one of $m$ sections of the observable universe. The $m$ regression problems were related through hyperparameters $z\in\mathbb{R}^{18}$, which parameterised the section-level parameter densities. The prior, $p_0\in\mathcal{F}$, was MVN. The specifics of the nonlinear regression model are quite involved and we refer the reader to \citet{vehtari2020} for details. We were unable to obtain the dataset, and so we generated synthetic data using parameters that were tuned by hand to try and match the qualitative properties of the original data -- see Appendix \ref{app:cosmicradiationdata} for examples. We used a reduced number of $m=36$ sites and $n=200$ observations per site to allow us to perform a comprehensive hyperparameter search. The lower-left panel of Figure \ref{fig:frontiers} shows once again that EP-$\eta$ and EP-$\mu$ consistently reach a given level of accuracy significantly faster than EP and SNEP.

\label{subsec:neuralresponsemodel}

\paragraph{Neural response model} A common task in neuroscience is to model the firing rates of neurons under various conditions. We performed inference in a hierarchical Bayesian neural response model, using recordings of V1 complex cells in an anesthesised adult cat \citep{blanche2009cat}. In this dataset, 10 neurons in a specific area of cat V1 were simultaneously recorded under the presentation of 18 different visual stimuli, each repeated 8 times, for a total of 144 trials. We modelled the observed spike counts of the 10 neurons in each trial as Poisson, with latent (log) intensities being MVN, with mean and covariance (collectively $z \in \mathbb{R}^{65}$) drawn from a NIW prior $p_0\in\mathcal{F}$. Inference of $z$ amounted to inferring the means, and inter-neuron covariances, of latent log firing rates across stimuli. We grouped the data into $m=8$ batches of $n=18$ trials each, so that the latent log firing rates for all trials in batch $i$ were jointly captured by $w_i\in\mathbb{R}^{180}$. The lower-right panel of Figure \ref{fig:frontiers} shows that EP-$\eta$ and EP-$\mu$ either match or beat the speed of EP for reaching any given level of accuracy. For high levels of accuracy the performances of the methods appear to converge with one another, but note that the asymptote is not exactly zero because the optimal solution was estimated by running EP with a large (but finite) number of samples.




\section{Limitations}
\label{sec:limitations}


In this work we have largely assumed that the drawing of MC samples is the dominant cost. In practice, other costs become relevant, which may shift the balance in favour of using more samples to estimate updates. We discuss this in Appendix \ref{app:computationalcost}, along with strategies for reducing computational overheads. We note, however, that wall-clock time results for our evaluation experiments (in Appendix \ref{app:wallclock_time_results}) are in line with the results of Section \ref{sec:evaluation}, indicating that the drawing of MC samples was indeed the dominant cost.

When EP and its variants---including those introduced here---are combined with MC methods, the underlying samplers typically have hyperparameters of their own, which are often adapted using so-called warmup phases. While EP-$\eta$ and EP-$\mu$ significantly reduce the complexity of tuning hyperparameters specific to EP, they do not help with those of the underlying samplers. This means that the complexity of hyperparameter tuning (for all EP variants) is still greater than that for direct MC methods. We used comprehensive hyperparameter searches in our experiments in order to perform a meaningful comparison with baseline methods, necessarily limiting the scale of problems we could tackle.

\begin{figure}[t]
    \centering
    \includegraphics{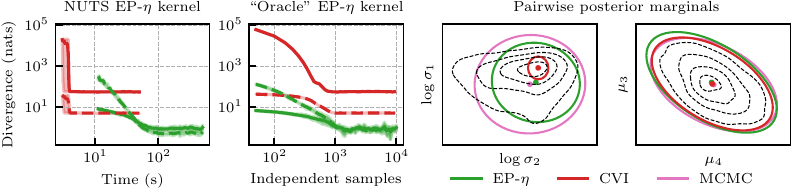}
    \caption{Comparison of EP-$\eta$ with conjugate-computation variational inference (CVI) on a hierarchical logistic regression model. The two leftmost plots show forward (solid) and reverse (dashed) KL divergences between the approximation of each method and a MVN distribution estimated directly from MCMC samples. The left panel shows a comparison with respect to wall-clock time, when NUTS is used as the underlying sampling kernel for EP-$\eta$. The left-middle panel shows a similar comparison, but with respect to the number of samples drawn, and using an \enquote{oracle} sampling kernel for EP-$\eta$. Hyperparameters were tuned for each method and shaded regions show the range of trajectories across 5 random seeds. The right and right-middle panels show pairwise marginals of the various MVN approximations overlaid on contours of the true posterior. Coloured dots and ellipses correspond to means and 2-standard-deviation contours, respectively. See Section \ref{sec:limitations} for discussion of these results, and Appendix \ref{app:cvicomparison} for further details.}
    \label{fig:cvi_comparison}
\end{figure}


Our focus has been on developing improved methods for performing EP when the updates must be estimated with noise, with the motivation for this kind of setting having been discussed by prior work \citep{xu2014sms,hasenclever2017,vehtari2020}. EP and its variants are agnostic to the choice of sampling kernel, and as such we have not discussed issues relating to the performance of the underlying samplers. In the left panel of Figure \ref{fig:cvi_comparison} however, we highlight the relative inefficiency of using EP-$\eta$ with NUTS as the underlying sampler when compared to conjugate-computation variational inference (CVI), an efficient variational inference method \citep{khan2017cvi}. While EP-$\eta$ is able to obtain much more faithful posterior approximations than CVI, it is significantly slower when measured in wall-clock time, despite the gains made over its predecessors. However, this is due to the relative cost of drawing independent samples with NUTS, as the sample efficiency of EP-$\eta$ is roughly equivalent to that of CVI when an (illustrative) \enquote{oracle} sampling kernel is used, as demonstrated in the left-middle panel of Figure \ref{fig:cvi_comparison}. Further discussion and details of the experiments behind Figure \ref{fig:cvi_comparison} can be found in Appendix \ref{app:cvicomparison}.

\section{Future work}
\label{sec:futurework}

In Section \ref{sec:limitations}, we demonstrated that the efficiency of EP variants is constrained by that of of the underlying samplers. There are several ways in which their performance could be improved by using knowledge of $p$ to guide sampling from $p_i$. One such improvement would be to use $p$ to set MCMC hyperparameters directly, obviating the need to adapt them during warm-up phases, e.g. when $p \in \mathcal{F}$ is MVN, its precision matrix could be used directly as the mass matrix in Hamiltonian Monte Carlo schemes. Another potential improvement is that samples from $p$ could be used to re-initialise MCMC chains at each iteration, providing approximate independence between updates for a small additional cost. It would also be worthwhile considering how our methods can be efficiently implemented on modern compute hardware to better take advantage of the inherently parallel nature of EP, e.g. practical performance could be improved by using sampling kernels that are themselves designed for efficient parallel execution \citep{hoffman2021cheeshmc}. Performance may be further improved in distributed settings by using asynchronous extensions of EP in order to minimise communication-related overheads and delays \citep{hasenclever2017,xu2014sms}. We leave investigation of these ideas to future work.
\section{Conclusion}
\label{sec:conclusion}

In this work, we used a novel interpretation of the moment-matching updates of EP to motivate two new EP variants that are far more robust to MC noise. We demonstrated that these variants can offer an improved speed-accuracy trade-off compared to their predecessors, and are easier to tune.

\section{Acknowledgements}

We would like to thank the anonymous reviewers for providing valuable feedback, and Jihao Andreas Lin for sharing his plotting expertise. Jonathan So is supported by the University of Cambridge Harding Distinguished Postgraduate Scholars Programme. Richard E. Turner is supported by Google, Amazon, ARM, Improbable, EPSRC grant EP/T005386/1, and the EPSRC Probabilistic AI Hub (ProbAI, EP/Y028783/1).

{
    \bibliography{main}
}

\newpage
\appendix
\section{Exponential families of distributions}
\label{app:expfam}

In this appendix, we give a very brief overview of exponential families of distributions. This is largely a condensed summary of relevant material from \citet{wainwright2008}, and we refer the reader to the original work for a far more detailed treatment.

The exponential family of distributions $\mathcal{F}$, defined by the $d$-dimensional, vector-valued sufficient statistic function $s(.)$, and base measure $\nu(.)$, has density function
\begin{align}
    p(z) &= \exp\bm{(}\eta^\top s(z) - A(\eta)\bm{)},\label{eqn:expfam_density}
\end{align} taken with respect to $\nu(.)$, where $A(\eta) = \log\int\exp\bm{(}\eta^\top s(z)\bm{)}\mathrm{d}\nu(z)$ is known as the \emph{log-partition} function. $\eta \in \Omega$ are known as the \emph{natural parameters}, and are used to index a specific member of $\mathcal{F}$. $\Omega$ is the set of normalisable natural parameters of $\mathcal{F}$, given by
\begin{align}
    \Omega = \left\{\eta \in \mathbb{R}^d\ \Bigg\lvert\  \int\exp\bm{(}\eta^\top s(z)\bm{)}\mathrm{d}\nu(z) < \infty \right\},
\end{align}
and is known as the \emph{natural domain} of $\mathcal{F}$. $\Omega$ is a convex set, and when it is open, the family $\mathcal{F}$ is said to be \emph{regular} -- we shall only consider regular families in this work. When the components of $s(.)$ are linearly independent, $\mathcal{F}$ is said to be \emph{minimal}. For minimal families, each distribution in the family is associated with a unique natural parameter vector $\eta$. An exponential family that is not minimal is said to be \emph{overcomplete}, in which case, each distribution is associated with an entire affine subspace of $\Omega$.

The expected sufficient statistics of a distribution with density $p(.)$ with respect to base measure $\nu(.)$, are given by $\mathbb{E}_{p(z)}[s(z)]$. Let $\mathcal{M}$ be defined as the set of expected sufficient statistics that can be attained by \emph{any} density $p(.)$ with respect to base measure $\nu(.)$ -- that is,
\begin{align}
    \mathcal{M} = \left\{\mu \in \mathbb{R}^d\ \Bigg\lvert\  \exists p(.): \int p(z)s(z)\mathrm{d}\nu(z) = \mu \right\}.
\end{align}
All vectors in the interior of $\mathcal{M}$, $\mathcal{M}^\circ$, are realisable by a member of $\mathcal{F}$, and so $\mu\in\mathcal{M}^\circ$ provides an alternative parameterisation of $\mathcal{F}$, in which $\mu$ are known as the \emph{mean parameters}, and $\mathcal{M}^\circ$ is known as the \emph{mean domain},.

For minimal families, $A(.)$ is strictly convex on $\Omega$.  The convex dual of $A(.)$ is defined as
\begin{align}
    A^*(\mu) = \sup_{\eta \in \Omega} \eta^\top\mu - A(\eta),
\end{align}
and on $\mathcal{M}^\circ$, $A^*(\mu)$ is equal to the negative entropy of the member of $\mathcal{F}$ with mean parameter $\mu$.

The mean parameters of the member of $\mathcal{F}$ with natural parameter $\eta$ can be obtained by $\mu = \nabla A(\eta)$. For minimal families, this mapping is one-to-one, and the reverse map is given by $\eta = \nabla A^*(\mu)$. For this reason, $\nabla A(.)$ and $\nabla A^*(.)$ are sometimes referred to as the \emph{forward mapping} and \emph{backward mapping} respectively. In this work, we say that a family $\mathcal{F}$ is \emph{tractable} if both the forward and backward mappings can be evaluated efficiently.

The Fisher information matrix (FIM) of an exponential family distribution, with respect to natural parameters $\eta$, is given by, $F(\eta) = \nabla^2 A(\eta)$. Furthermore, in minimal families, the FIM with respect to mean parameters $\mu$ is given by $F(\mu) = \nabla^2 A^*(\mu)$. Using the forward and backward mappings, we also have
\begin{align}
    F(\eta) &= \pdv{\mu}{\eta} \\
    F(\mu) &= \pdv{\eta}{\mu},
\end{align}
and from the inverse function theorem,
\begin{align}
    F(\eta)^{-1} &= \pdv{\eta}{\mu} \\
    F(\mu)^{-1} &= \pdv{\mu}{\eta}.
\end{align}

\section{Conventional presentation of EP updates}
\label{app:conventionalep}

In this appendix we give a more conventional presentation of the EP updates, and show equivalence with the atypical presentation of Section \ref{sec:background}. We will assume here that $\beta_i = 1\;\forall\;i$, but similar reasoning can be used to show equivalence in the power EP case, where $\beta_i \neq 1$. Note that the update of power EP is often presented as having scale proportional to $\beta_i$, with a separate damping parameter. We subsume both into $\alpha$, with $\alpha=1$ corresponding to the default damping suggested by \citet{minka2004powerep}.

Given a target distribution with the form
\begin{align}
    p^*(z) &\propto \exp\big(\eta_0^\top s(z)\big)\prod_{i=1}^m\exp\big(\ell_i(z)\big),
\end{align}
EP attempts to find an approximation $p \in \mathcal{F}$ such that
\begin{align}
    p(z) &\propto \exp\big(\eta_0^\top s(z)\big)\prod_i\exp\big(\lambda_i^\top s(z)\big) \approx p^*(z).
\end{align}
The $i$-th \emph{site parameter} $\lambda_i$ can loosely be interpreted as the natural parameters of a $\mathcal{F}$-approximation to the $i$-th target factor, $\exp\bm{(}\ell_i(z)\bm{)}$. EP is also often used to approximate the normalising constant of $p^*(z)$ -- we have omitted details here, but see e.g. \citet{bishop2007prml}.

In order to update $\lambda_i$, EP performs a KL-divergence minimisation between $p(z)$ and a \emph{local} approximation to the target distribution, consisting of just one \enquote{real} factor and $(m-1)$ approximate ones. Specifically,
\begin{align}
    \lambda_i &\leftarrow \argmin_{\lambda_i}\kl{\bar{p}_i(z)}{p(z; \lambda_i)},\label{eqn:ep_local_kl}
\end{align}
where
\begin{align}
    \bar{p}_i(z) \propto \exp\big(\ell_i(z)\big)\prod_{j\neq i}\exp\big(\lambda_j^\top s(z)\big),
\end{align}
and we have used the notation $p(z; \lambda_i)$ to make the dependence on $\lambda_i$ explicit. It can be easily shown that the solution to \eqref{eqn:ep_local_kl} is found by moment matching. That is, the KL-divergence is minimised when
\begin{align}
    \mathbb{E}_{p(z; \lambda_i)}[s(z)] &= \mathbb{E}_{\bar{p}_i(z)}[s(z)].
\end{align}
For a minimal exponential family $\mathcal{F}$, there is a unique member of $\mathcal{F}$ with mean parameters $\mathbb{E}_{\bar{p}_i(z)}[s(z)]$, and its natural parameters are given by $\nabla A^*\bm{(}\mathbb{E}_{\bar{p}_i(z)}[s(z)]\bm{)}$. By equating this with the parameters of $p(z)$, we have
\begin{align}
    \lambda_i &= \nabla A^*\bm{(}\mathbb{E}_{\bar{p}_i(z)}[s(z)]\bm{)} - \eta_0 - \sum_{j\neq i}\lambda_j. \label{eqn:conventional_ep_update}
\end{align}
Update \eqref{eqn:conventional_ep_update} is the \enquote{standard} update of EP, and can be performed either serially or in parallel (over $i$). Often a damping factor $\alpha$ is used, giving
\begin{align}
    \lambda_i &= (1-\alpha)\lambda_i + \alpha\left(\nabla A^*\bm{(}\mathbb{E}_{\bar{p}_i(z)}[s(z)]\bm{)} - \eta_0 - \sum_{j\neq i}\lambda_j\right). \label{eqn:conventional_ep_update_damped}
\end{align}

In Section \ref{sec:background} we presented EP and several variants as performing some number of \enquote{inner} updates, to decrease $L$ with respect to $\lambda_i$, with an \enquote{outer} update with respect to $\theta$. These updates are given by
\begin{align}
    \textbf{Inner update:}\quad & \lambda_i \leftarrow \lambda_i - \alpha\Big(\eta_0 + \sum_j\lambda_j - \nabla A^*\bm{(}\mathbb{E}_{p_i(z)}[s(z)]\bm{)}\Big),\label{eqn:ep_inner_update_apx} \\
    \textbf{Outer update:}\quad & \theta \leftarrow \eta_0 + \sum_j \lambda_j \label{eqn:ep_outer_update_apx}.
\end{align}
Immediately after an outer update we have that $\theta = \eta_0 + \sum_j \lambda_j$, and so for $\beta_i = 1$ we have
\begin{align}
    p_i(z) &\propto \exp\left((\theta - \lambda_i)^\top s(z) + \ell_i(z)\right) \nonumber\\
        &= \exp\left((\eta_0 + \sum_{j\neq i}\lambda_j)^\top s(z) + \ell_i(z)\right) \nonumber\\
        &\propto \bar{p}_i(z).
\end{align}
We stated in Section \ref{sec:background} that (non-double-loop) EP can be viewed as performing a single inner update per outer update, and so by rolling \eqref{eqn:ep_inner_update_apx} and \eqref{eqn:ep_outer_update_apx} into a single update for $\lambda_i$, we have
\begin{align}
    \lambda_i &\leftarrow \lambda_i - \alpha\Big(\eta_0 + \sum_j\lambda_j - \nabla A^*\bm{(}\mathbb{E}_{\bar{p}_i(z)}[s(z)]\bm{)}\Big)\nonumber\\
    &= (1-\alpha)\lambda_i + \alpha\left(\nabla A^*\bm{(}\mathbb{E}_{\bar{p}_i(z)}[s(z)]\bm{)} - \eta_0 - \sum_{j\neq i}\lambda_j\right),
\end{align}
which is identical to \eqref{eqn:conventional_ep_update_damped}.
\section{EP inner update derivation}
\label{app:epupdates}

Begin by taking gradients of \eqref{eqn:epvariationalproblem}, with respect to $\lambda_i$, giving
\begin{align}
    \nabla_{\lambda_i}L(\theta, \lambda_1, \ldots, \lambda_m) &= \nabla A\left(\eta_0 + \sum_j\lambda_j\right) + \beta_i \nabla_{\lambda_i}A_i\bm{(}(\theta - \beta_i^{-1}\lambda_i, \beta_i^{-1})\bm{)}. \label{eqn:objectivegradient}
\end{align}
At a minimum of $L$ with respect to $\lambda_i$, both sides of \eqref{eqn:objectivegradient} must be equal to zero. We can use this to define a fixed-point condition with respect to $\lambda_i$, and its updated value $\lambda_i'$
\begin{align}
    \nabla A\left(\eta_0 + \lambda_i' + \sum_{j\neq i}\lambda_j\right) &= -\beta_i\nabla_{\lambda_i}A_i\bm{(}(\theta - \beta_i^{-1}\lambda_i, \beta_i^{-1})\bm{)} \nonumber \\
    &= \mathbb{E}_{p_i(z)}[s(z)] \label{eqn:ep_fixedpoint_condition}
\end{align}
where $p_i \in \mathcal{F}_i$ denotes the $i$-th \textit{tilted distribution}, defined by \eqref{eqn:tilted_distribution}. Applying $\nabla A^*(.)$, the inverse of $\nabla A(.)$, to both sides of \eqref{eqn:ep_fixedpoint_condition}, and rearranging terms, we recover the moment-matching update of EP
\begin{align}
    \lambda_i &\leftarrow \nabla A^*\big(\mathbb{E}_{p_i(z)}[s(z)]\big) -\eta_0 - \sum_{j\neq i}\lambda_j. \label{eqn:ep_inner_update_undamped_2}
\end{align}
Often a damping parameter is used to aid convergence, giving the more general update
\begin{align}
    \lambda_i &\leftarrow (1-\alpha)\lambda_i + \alpha\left(\nabla A^*\big(\mathbb{E}_{p_i(z)}[s(z)]\big) -\eta_0 - \sum_{j\neq i}\lambda_j\right) \nonumber\\
    &\leftarrow \lambda_i + \alpha\left(\nabla A^*\big(\mathbb{E}_{p_i(z)}[s(z)]\big) -\eta_0 - \sum_j\lambda_j\right), \label{eqn:ep_inner_update_2}
\end{align}
where the level of damping is given by $(1-\alpha)$. It was shown by \citet{heskes2003ep} that \eqref{eqn:ep_inner_update} follows a decrease direction in $L$ with respect to $\lambda_i$, and so is guaranteed to decrease $L$ when $\alpha$ is small enough.

\section{Natural gradient view of EP}
\label{app:epnatgrads}

Let $\smash{\cramped{\tilde{p}_i}^{(t)}}$ be the member of $\mathcal{F}$ with natural parameter $\smash{\cramped{\eta_i}^{(t)}(\lambda_i) = \eta_0 + \cramped{\sum}_j\cramped{\lambda_j}^{(t)} + \alpha^{-1}(\lambda_i - \cramped{\lambda_i}^{(t)})}$. Proposition \ref{prop:ngd_equivalence} states that that the moment-matching updates of EP can be viewed as performing NGD in $L$ with respect to the mean parameters, $\mu_i$, of $\smash{\cramped{\tilde{p}_i}^{(t)}}$. We restate the proposition below, and give a proof. Note that the right hand side of \eqref{eqn:ngd_update_mu} maps the EP update \eqref{eqn:ep_inner_update} onto mean parameters of $\smash{\cramped{\tilde{p}_i}^{(t)}}$. The left hand side of \eqref{eqn:ngd_update_mu} is simply a NGD update in $\mu_i$.

\ngdprop*
\begin{proof}
    Taking the left hand side of \eqref{eqn:ngd_update_mu}, we have
    \begin{align}
        \mu_i - \alpha^{-1}\big[\tilde{F}_i^{(t)}(\mu_i)\big]^{-1}\pdv{L_i}{\mu_i} &= \mu_i - \alpha^{-1}\big[\tilde{F}_i^{(t)}(\mu_i)\big]^{-1}\pdv{\lambda_i}{\mu_i}\pdv{L}{\lambda_i} \nonumber\\
        &= \mu_i - \alpha^{-1}\bigg(\pdv{\eta_i}{\mu_i}\bigg)^{-1}\pdv{\eta_i}{\mu_i}\pdv{\lambda_i}{\eta_i}\pdv{L}{\lambda_i} \nonumber\\
        &= \mu_i - \pdv{L}{\lambda_i} \nonumber\\
        &= \mu_i - \left(\nabla A\Big(\eta_0 + \sum_j \lambda_j\Big) - \mathbb{E}_{p_i(z)}[s(z)]\right)  \nonumber\\
        &=\mu_i - \left(\mu_i - \mathbb{E}_{p_i(z)}[s(z)]\right) \nonumber\\
        &= \mathbb{E}_{p_i(z)}[s(z)],
    \end{align}
    where the penultimate equality follows from the definition $\smash{\mu_i = \mathbb{E}_{\cramped{\tilde{p}_i}^{(t)}(z)}[s(z)]}$, and the fact that $\smash{\cramped{\lambda_j}^{(t)} = \lambda_j\ \forall\ j}$ before the update. All that remains is to show that the right hand side of \eqref{eqn:ngd_update_mu} is equal to $\mathbb{E}_{p_i(z)}[s(z)]$.
    \begin{align}
        &\big(\nabla A\circ \eta_i^{(t)}\big)
        \bigg(\lambda_i - \alpha\Big(\eta_0 + \sum_j\lambda_j - \nabla A^*\bm{(}\mathbb{E}_{p_i(z)}[s(z)]\bm{)})\Big)\bigg)\nonumber\\
        =\ & \nabla A \bigg(\alpha^{-1}(\lambda_i - \lambda_i^{(t)}) + \nabla A^*(\mathbb{E}_{p_i(z)}[s(z)]\bm{)})\bigg)\nonumber\\
        =\ & \mathbb{E}_{p_i(z)}[s(z)]
    \end{align}
\end{proof}


\section{EP-\texorpdfstring{$\boldsymbol{\eta}$}{eta} update direction derivation}
\label{app:epetaupdate}
Using the relation $\smash{\eta_i = \cramped{\eta_i}^{(t)}(\lambda_i)}$, and therefore $\smash{\lambda_i = \cramped{(\cramped{\eta_i}^{(t)})}^{-1}(\eta_i)}$, we have
\begin{align}
    -\Big[\tilde{F}_i(\eta_i)\Big]^{-1}\pdv{L}{\eta_i} &= -\Big[\tilde{F}_i(\eta_i)\Big]^{-1}\pdv{\lambda_i}{\eta_i}\pdv{L}{\lambda_i}\nonumber\\
    &= -\alpha\Big(\pdv{\mu_i}{\eta_i}\tilde{F}_i(\mu_i)\pdv{\mu_i}{\eta_i}\Big)^{-1}\pdv{L}{\lambda_i} \nonumber\\
    &= -\alpha\Big(\pdv{\eta_i}{\mu_i}\tilde{F}_i(\mu_i)^{-1}\pdv{\eta_i}{\mu_i}\Big)\pdv{L}{\lambda_i} \nonumber\\
    &= -\alpha\Big(\pdv{\eta_i}{\mu_i}\pdv{\mu_i}{\eta_i}\pdv{\eta_i}{\mu_i}\Big)\pdv{L}{\lambda_i} \nonumber\\
    &= -\alpha\pdv{\eta_i}{\mu_i}\Big(\nabla A\big(\eta_0 + \sum_j \lambda_j\big) - \mathbb{E}_{p_i(z)}[s(z)]\Big).
\end{align}
\section{EP-\texorpdfstring{$\boldsymbol{\mu}$}{mu} update derivation}
\label{app:epmuupdate}
For $\alpha = 1$, we have
\begin{align}
    \mu_i - \epsilon\Big[\tilde{F}_i(\mu_i)\Big]^{-1}\pdv{L}{\mu_i} 
    &= \mu_i - \epsilon\Big[\tilde{F}_i(\mu_i)\Big]^{-1}\pdv{\eta_i}{\mu_i}\pdv{\lambda_i}{\eta_i}\pdv{L}{\lambda_i} \nonumber\\
    &= \mu_i - \epsilon\Big(\pdv{\eta_i}{\mu_i}\Big)^{-1}\pdv{\eta_i}{\mu_i}\pdv{\lambda_i}{\eta_i}\Big(\nabla A\big(\eta_0 + \sum_j \lambda_j\big) - \mathbb{E}_{p_i(z)}[s(z)]\Big) \nonumber\\
    &= \mu_i - \epsilon\Big(\nabla A\big(\eta_0 + \sum_j \lambda_j\big) - \mathbb{E}_{p_i(z)}[s(z)]\Big).
\end{align}
\section{Bias of EP and EP-\texorpdfstring{$\boldsymbol{\mu}$}{mu}}
\label{app:epbias}

Let the bias in dimension $d$ of $\smash{\cramped{\mu_i}^{(t+1)}}$, after an update at time $t$, be defined as $\smash{\mathbb{E}[\cramped{\mu_i}^{(t+1)} - \cramped{\bar{\mu}_i}^{(t+1)}]_d}$, where $\smash{\cramped{\bar{\mu}_i}^{(t+1)}}$ is the value of $\smash{\cramped{\mu_i}^{(t+1)}}$ after a noise-free update, and expectation is taken over the sampling distributions of all parallel updates at time $t$. Propositions \ref{prop:ep_bias} and \ref{prop:ep_mu_bias}, which we restate and prove below, summarise the effect of decreasing $\alpha$ and $\epsilon$ on EP and EP-$\mu$, respectively.

\epbiasprop*
\begin{proof}
    Let $\smash{\cramped{\eta_i}^{(t)}}$ and $\smash{\cramped{\mu_i}^{(t)}}$ be the pre-update natural and mean parameters of $\smash{\cramped{\tilde{p}_i}^{(t)}}$ respectively. Furthermore, let
    \begin{align}
        \mu_i'^{(t)} &= \mu_i^{(t)} - \alpha^{-1}\big[\tilde{F}_i^{(t)}(\mu_i)\big]^{-1}\pdv{\lambda_i}{\mu_i}\pdv{L}{\lambda_i} + \xi_i \nonumber\\
        &= \mathbb{E}_{p_i(z)}[s(z)] + \xi_i,
    \end{align}
    be the mean parameters of $\smash{\cramped{\tilde{p}_i}^{(t)}}$ after an EP inner update \eqref{eqn:ep_inner_update} but \emph{before} being mapped to mean parameters of $\smash{\cramped{\tilde{p}_i}^{(t+1)}}$ through map \eqref{eqn:mean_parameter_map}. $\xi_i$ is some zero-mean noise, e.g. due to MC variation. The second equality follows from the results of Appendix \ref{app:epnatgrads}. Similarly, let $\smash{\cramped{\eta_i'}^{(t)} = \nabla A^*(\cramped{\mu_i'}^{(t)})}$ be the corresponding natural parameters. Now apply map \eqref{eqn:mean_parameter_map} to convert to mean parameters of $\smash{\cramped{\tilde{p}_i}^{(t+1)}}$,
    \begin{align}
        \mu_i^{(t+1)} &= \Big(\nabla A\circ \eta_i^{(t+1)} \circ \big(\eta_i^{(t)}\big)^{-1} \circ \nabla A^{*}\Big)\big(\mu_i'^{(t)}\big) \nonumber\\
        &= \nabla A \bigg(\Big(\eta_i^{(t+1)}\circ\big(\eta_i^{(t)}\big)^{-1}\Big)\Big(\nabla A^{*}\big(\mu_i'^{(t)}\big)\Big)\bigg) \nonumber\\
        &= \nabla A\bigg(\nabla A^{*}\big(\mu_i^{(t)}\big) + \alpha\sum_j\Big(\nabla A^{*}\big(\mu_j^{(t)'}\big) - \nabla A^{*}\big(\mu_j^{(t)}\big)\Big)\bigg),
    \end{align}
    where the last equality follows from the definition of $\smash{\cramped{\eta_i}^{(t)}(.)}$, and by observing that 
    \begin{align}
        \lambda_j^{(t+1)} - \lambda_j^{(t)} &= \alpha(\eta_j' - \eta_j) \nonumber\\
        &= \alpha\Big(A^{*}\big(\mu_j^{(t)'}\big) - \nabla A^{*}\big(\mu_j^{(t)}\big)\Big),
    \end{align}
    for all $j$. Let $\smash{\cramped{\bar{\mu}_i'}^{(t)}}$ be the mean parameters of $\smash{\cramped{\tilde{p}_i}^{(t)}}$ after a \emph{noise-free} update, but before mapping to mean parameters of $\smash{\cramped{\tilde{p}_i}^{(t+1)}}$, so that $\smash{\cramped{\mu_i'}^{(t)} = \cramped{\bar{\mu}_i'}^{(t)} + \xi_i}$. Substituting this in, we have
    \begin{align}
        \mu_i^{(t+1)} &= \nabla A\bigg(\nabla A^{*}\big(\mu_i^{(t)}\big) + \alpha\sum_j\Big(\nabla A^{*}\big(\bar{\mu}_j^{(t)'} + \xi_j\big) - \nabla A^{*}\big(\mu_j^{(t)}\big)\Big)\bigg).
    \end{align}
    Taking a Taylor expansion around $\alpha=0$, we have
    \begin{align}
        \mu_i^{(t+1)} &= \mu_i^{(t)} + \alpha\left[\nabla^2 A\bigg(\nabla A^{*}\big(\mu_i^{(t)}\big)\bigg)\right]\left(\sum_j\nabla A^{*}\big(\bar{\mu}_j^{(t)'} + \xi_j\big) - \nabla A^{*}\big(\mu_j^{(t)}\big)\right) + O(\alpha^2).
    \end{align}
    Now subtract the noise-free update $\smash{\cramped{\bar{\mu}_i}^{(t+1)}}$, and take expectations with respect to $\xi_j$ for $j=1, \ldots, m$,
    \begin{align}
        \mathbb{E}\Big[\mu_i^{(t+1)} - \bar{\mu}_i^{(t+1)}\Big] &= \mathbb{E}\Bigg[\alpha\left[\nabla^2 A\bigg(\nabla A^{*}\big(\mu_i^{(t)}\big)\bigg)\right]\bigg(\nonumber\\
        &\qquad \sum_j\nabla A^{*}\big(\bar{\mu}_j^{(t)'} + \xi_j\big) - \nabla A^{*}\big(\bar{\mu}_j^{(t)'}\big)\bigg) + O(\alpha^2)\Bigg].
    \end{align}
     the first term on the right hand side does not have zero expectation in general---$\nabla A^*(.)$ is not generally affine---and so the bias is $O(\alpha)$ as $\alpha \rightarrow 0^+$.
    
    To see that the expected change in $L$ is also $O(\alpha)$, take the Taylor expansion of $L$ along the update direction---obtained by subtracting $\lambda_i$ from the right hand side of \eqref{eqn:ep_inner_update})---as a function of the step size around $\alpha=0$,
    \begin{align}
        -\alpha\Big(\eta_0 + \sum_{j\neq i}\lambda_j - \nabla A\big(E_{p_i(z)}[s(z)] + \xi_i\big)\Big)^\top\pdv{L}{\lambda_i} + O(\alpha^2),
    \end{align}
    which is clearly $O(\alpha)$ as $\alpha \rightarrow 0^+$.
\end{proof}

\epmubiasprop*

\begin{proof}
    Let $\smash{\cramped{\eta_i}^{(t)}}$ and $\smash{\cramped{\mu_i}^{(t)}}$ be the pre-update natural and mean parameters of $\smash{\cramped{\tilde{p}_i}^{(t)}}$ respectively. Furthermore, let
    \begin{align}
        \mu_i'^{(t)} &= \mu_i^{(t)} - \epsilon\Big(\nabla A\big(\eta_0 + \sum_j \lambda_j\big) - \mathbb{E}_{p_i(z)}[s(z)] - \xi_i\Big) \nonumber\\
        &= \mu_i^{(t)} - \epsilon\Big(\mu_i^{(t)} - \mathbb{E}_{p_i(z)}[s(z)] - \xi_i\Big),
    \end{align}
    be the mean parameters of $\smash{\cramped{\tilde{p}_i}^{(t)}}$ after an EP-$\mu$ update \eqref{eqn:ep_mu_updatemu} but \emph{before} being mapped to mean parameters of $\smash{\cramped{\tilde{p}_i}^{(t)}}$ through \eqref{eqn:mean_parameter_map}. $\xi_i$ is some zero-mean noise, e.g. due to MC variation. The second equality follows from the definition $\smash{\cramped{\mu_i}^{(t)} = \mathbb{E}_{\cramped{\tilde{p}_i}^{(t)}(z)}[s(z)]}$, and the fact that $\smash{\cramped{\tilde{p}_i}^{(t)} = p}$ before an update. Now apply map \eqref{eqn:mean_parameter_map} to convert to mean parameters of $\smash{\cramped{\tilde{p}_i}^{(t+1)}}$,
    \begin{align}
        \mu_i^{(t+1)} &= \Big(\nabla A\circ \eta_i^{(t+1)} \circ \big(\eta_i^{(t)}\big)^{-1} \circ \nabla A^{*}\Big)\left(\mu_i'^{(t)}\right) \nonumber\\
        &= \nabla A \bigg(\Big(\eta_i^{(t+1)}\circ\left(\eta_i^{(t)}\right)^{-1}\Big)\Big(\nabla A^{*}\left(\mu_i'^{(t)}\right)\Big)\bigg) \nonumber\\
        &= \nabla A\bigg(\nabla A^{*}\left(\mu_i^{(t)}\right) + \sum_j\Big(\nabla A^{*}\left(\mu_j^{(t)'}\right) - \nabla A^{*}\left(\mu_j^{(t)}\right)\Big)\bigg),
    \end{align}    
    where the last equality is obtained from the definition of $\smash{\cramped{\eta_i}^{(t)}(.)}$ for $\alpha=1$, and using
    \begin{align}
        \lambda_j^{(t+1)} - \lambda_j^{(t)} &= \eta_j' - \eta_j \nonumber\\
        &= A^{*}\big(\mu_j^{(t)'}\big) - \nabla A^{*}\big(\mu_j^{(t)}\big)
    \end{align}
    for all $j$. Substituting in the definition of $\smash{\cramped{\mu_j'}^{(t)}}$, we have
    \begin{align}
        \mu_i^{(t+1)} = \nabla A\Bigg(&\nabla A^{*}\big(\mu_i^{(t)}\big) + \sum_j\bigg[\nonumber\\
            &\qquad\nabla A^{*}\Big((1-\epsilon)\mu_j^{(t)} + \epsilon\big(\mathbb{E}_{p_j(z)}[s(z)] + \xi_j\big)\Big) - \nabla A^{*}\Big(\mu_j^{(t)}\Big)\bigg]\Bigg).
    \end{align}
    Taking a Taylor expansion around $\epsilon=0$, we have
    \begin{align}
        \mu_i^{(t+1)} &= \mu_i^{(t)} + \epsilon\left[\nabla^2 A\Big(\nabla A^{*}\big(\mu_i^{(t)}\big)\Big)\right]\sum_j\pdv{}{\epsilon}\Big\{\nonumber\\
            &\qquad \nabla A^*\Big((1-\epsilon)\mu_j^{(t)} + \epsilon\big(\mathbb{E}_{p_j(z)}[s(z)] + \xi_j\big)\Big)\Big\}\Big\rvert_0 + O(\epsilon^2) \nonumber\\
        &= \mu_i^{(t)} + \epsilon\left[\nabla^2 A\Big(\nabla A^{*}\big(\mu_i^{(t)}\big)\Big)\right]\left[\nabla^2 A^{*}\big(\mu_i^{(t)}\big)\right]\sum_j\pdv{}{\epsilon}\Big\{\nonumber\\
            &\qquad (1-\epsilon)\mu_j^{(t)} + \epsilon\big(\mathbb{E}_{p_j(z)}[s(z)] + \xi_j\big)\Big\}\Big\rvert_0 + O(\epsilon^2) \nonumber\\
        &= \mu_i^{(t)} + \epsilon\left[\nabla^2 A\Big(\nabla A^{*}\big(\mu_i^{(t)}\big)\Big)\right]\left[\nabla^2 A^{*}\big(\mu_i^{(t)}\big)\right]\sum_j\Big(\nonumber\\
            &\qquad \mathbb{E}_{p_j(z)}[s(z)] + \xi_j - \mu_j^{(t)}\Big) + O(\epsilon^2).
    \end{align}
    Finally, subtract the noise free $\smash{\cramped{\bar{\mu}_i}^{(t+1)}}$, and take expectations (over $\xi_j$ for $j=1, \ldots, m$) to obtain the bias
    \begin{align}
        \mathbb{E}\Big[\mu_i^{(t+1)} - \bar{\mu}_i^{(t+1)}\Big] &= \mathbb{E}\bigg[\epsilon\left[\nabla^2 A\Big(\nabla A^{*}\big(\mu_i^{(t)}\big)\Big)\right]\left[\nabla^2 A^{*}\big(\mu_i^{(t)}\big)\right]\sum_j\xi_j + O(\epsilon^2)\bigg].
    \end{align}
    By assumption $E[\xi_j] = \mathbf{0}\ \forall\ j$, hence the first order term disappears, and the bias is $O(\epsilon^2)$.
    
    To see that the expected decrease in $L$ is also $O(\epsilon)$, note that we are simply following the gradient in $L$ with respect to $\mu$, multiplied by the inverse Fisher and scaled by $\epsilon$. Neither the reparameterisation (from $\lambda_i$ to $\mu_i$) or the Fisher depend on $\epsilon$, and so the change in $L$ must be $O(\epsilon)$ as $\epsilon \rightarrow 0^+$ by simple Taylor expansion arguments.
\end{proof}

\section{Implicit geometries of EP variants}
\label{app:ngdsurrogates}

In this paper we have shown that the updates of EP, EP-$\eta$, and EP-$\mu$ can be interpreted as performing NGD of $L$ with respect to the distributions $\smash{\{\cramped{p_i}^{(t)}\}_i}$. The SNEP method of \citet{hasenclever2017}, in contrast, performs NGD of $L$ with respect to the \emph{site potentials}, treating them as bona fide distributions in $\mathcal{F}$ for the purposes of NGD -- that is, NGD is performed with respect to the distributions with densities given by $\exp\bm{(}\lambda_i^\top s(z) - A(\lambda_i)\bm{)}$ for $i=1, \ldots, m$. In Section \ref{subsec:relatedwork}, we argued that the statistical manifolds of $\smash{\{\cramped{p_i}^{(t)}\}_i}$ are far more relevant for the optimisation of $L$ than those of the site potential pseudo-distributions. In this appendix we shall justify this claim.

The loss function $L$, given by \eqref{eqn:epvariationalproblem}, has a unique optimum with respect to $\{\lambda_i\}_i$ when $\nabla_{\lambda_i}L(\theta, \lambda_1, \ldots, \lambda_m) = 0\;\forall\;i$, which implies that
\begin{align}
    \nabla A(\eta_0 + \sum_j\lambda_j) &= -\beta_i \nabla_{\lambda_i}A_i(\theta - \beta^{-1}\lambda_i, \beta^{-1}) \Leftrightarrow \nonumber\\
    \mathbb{E}_{p(x)}[s(z)] &= \mathbb{E}_{p_i(x)}[s(z)].
\end{align}
The minimisation of $L$ with respect to $\{\lambda_i\}_i$ can therefore be viewed as a joint optimisation in the $m+1$ distributions $p, p_1,\;\ldots,\;p_m$, with the minimum found when their expected $\mathcal{F}$-statistics match. Intuitively, it seems that optimisation of $L$ with respect to $\lambda_i$ should be performed with consideration for the geometry of $p$ and $p_i$. By instead performing NGD with respect to the distributions $\smash{\{\cramped{p_i}^{(t)}\}_i}$, in the case of EP, EP-$\eta$ and EP-$\mu$, or with respect to the site potential psuedo-distributions, in the case of SNEP, these methods can be interpreted as using \emph{surrogate} distributions for NGD \citep{so2024sngd}. It is possible to show that the Fisher information matrix (FIM) of $\smash{\cramped{\tilde{p}_i}^{(t)}}$ with respect to $\lambda_i$ is equal to that of $p$, up to a scalar constant, and furthermore, it can also be seen as a close approximation to that of $p_i$. This motivates the use of NGD with respect to $\smash{\cramped{\tilde p_i}^{(t)}}$ for the optimisation of $L$. In contrast, the FIMs of the site-potential pseudo-distributions, used by SNEP, can bear little relation to those of $p$ or $p_i$. In particular, the difference will be greatest when the full approximation $\smash{\eta_0 + \cramped{\sum}_j\lambda_j}$ is significantly different than $\lambda_i$, and we believe this may be why \citet{hasenclever2017} found that increasing the number of sites reduced the convergence speed of SNEP.

We can make these arguments more concrete by considering the curvature of $L$. Taking second derivatives of $L$ with respect to $\lambda_i$, we have
\begin{align}
    \nabla_{\lambda_i}^2 L(\theta, \lambda_1, \ldots, \lambda_m) &= \nabla_{\lambda_i}^2 A(\eta_0 + \sum_j \lambda_j) + \beta_i\nabla_{\lambda_i}^2 A_i\bm{(}(\theta - \beta_i^{-1}\lambda_i, \beta_i^{-1})\bm{)}.
\end{align}
Let $\lambda_j^{(t)} = \lambda_j$ at time $t$ for $j=1, \ldots m$, then
\begin{align}
    \nabla_{\lambda_i = \lambda_i^{(t)}}^2 L(\theta, \lambda_1, \ldots, \lambda_m) &= \nabla_{\lambda_i}^2 A(\eta_0 + \sum_j \lambda_j^{(t)} + (\lambda_i - \lambda_i^{(t)})) + \beta_i\nabla_{\lambda_i}^2 A_i\bm{(}(\theta - \beta_i^{-1}\lambda_i, \beta_i^{-1})\bm{)}\nonumber\\
    &= \pdv{\mu_i}{\eta_i} + \beta_i\nabla_{\lambda_i}^2 A_i\bm{(}(\theta - \beta_i^{-1}\lambda_i, \beta_i^{-1})\bm{)},\label{eqn:intermediate_curvature_derivation}
\end{align}
where $\eta_i$ and $\mu_i$ are the natural and mean parameters of $\smash{\cramped{\tilde{p}_i}^{(t)}}$, respectively. The Hessian of the second term will not generally be available in closed form, or easily invertible. We can however find a convenient approximation. Note that after an outer update, at time $t$, we have $\smash{\theta = \eta_0 + \cramped{\sum}_j\cramped{\lambda_j}^{(t)}}$. Also we have that $c\exp(\lambda_i^\top s(z)) \approx \exp\bm{(}\ell_i(z)\bm{)}$, at least near convergence, where $c$ is some unknown scalar constant. Combining these, we have
\begin{align}
    \nabla_{\lambda_i}^2 A_i\bm{(}(\theta - \beta_i^{-1}\lambda_i, \beta_i^{-1})\bm{)} &= \nabla_{\lambda_i}^2\left\{\log\int \exp\left((\theta - \beta^{-1}\lambda_i)^\top s(z) + \beta^{-1}\ell_i(z)\right)\mathrm{d}\nu(z) \right\}\nonumber\\
        &\approx \nabla_{\lambda_i}^2 \Big\{\log\int \exp\Big( \nonumber\\
        &\qquad\qquad \big(\eta_0 + \sum_j\lambda_j^{(t)} - \beta^{-1}\lambda_i + \beta^{-1}\lambda_i^{(t)}\big)^\top s(z)\Big)\mathrm{d}\nu(z) \Big\} \nonumber\\
        &= \nabla_{\lambda_i}^2 A\left(\eta_0 + \sum_j\lambda_j^{(t)} + \beta^{-1}(\lambda_i^{(t)} - \lambda_i)\right) \nonumber\\
        &= \nabla_{\lambda_i}^2 A\left(\eta_0 + \sum_j\lambda_j^{(t)} + \beta^{-1}(\lambda_i - \lambda_i^{(t)})\right) \nonumber\\
        &= \beta^{-2}\pdv{\mu_i}{\eta_i}.
\end{align}
Combining this with \eqref{eqn:intermediate_curvature_derivation}, we have
\begin{align}
    \nabla_{\lambda_i = \lambda_i^{(t)}}^2 L(\theta, \lambda_1, \ldots, \lambda_m) &\approx (1 + \beta_i^{-1})\pdv{\mu_i}{\eta_i},
\end{align}
and if we use this curvature approximation to perform a quasi-Newton step in $L$ with respect to $\lambda_i$ and step size $\tilde\epsilon$, we have
\begin{align}
    \lambda_i &\leftarrow \lambda_i - \tilde\epsilon(1 + \beta_i^{-1})^{-1}\pdv{\eta_i}{\mu_i}\nabla_{\lambda_i} L(\theta, \lambda_1, \ldots, \lambda_m) \nonumber\\
    &= \lambda_i - \tilde\epsilon(1 + \beta_i^{-1})^{-1}\pdv{\eta_i}{\mu_i}\left(\nabla A(\eta_0 + \sum_j\lambda_j) - \mathbb{E}_{p_i(z)}[s(z)]\right).
\end{align}
By letting $\epsilon = \tilde\epsilon(1 + \beta_i^{-1})^{-1}$ we have equivalence with the update of EP-$\eta$, given by \eqref{eqn:ep_eta_update}.  We therefore have that EP-$\eta$ is equivalent to performing a quasi-Newton step in $L$. Furthermore, the approximate curvature becomes exact at the optimum if $\ell_i(z)$ is an affine function of $s(z)$, e.g. if $\exp\bm{(}\ell_i(z)\bm{)}$ is an (unnormalised) member of $\mathcal{F}$.

In contrast, let us consider the implicit curvature approximation used by SNEP. Let $\gamma_i$ be mean parameters of the member of $\mathcal{F}$ with natural parameter $\lambda_i$. To perform the inner minimisation of the variational objective, SNEP performs NGD in $L$ with respect to $\gamma_i$. The implicit curvature matrix used by the SNEP update is then the Fisher with respect to $\gamma_i$, which is given by $\smash{\pdv{\lambda_i}{\gamma_i} = \nabla A^*(\gamma_i)}$. Now consider the actual curvature of $L$ with respect to $\gamma_i$,
\begin{align}
    \nabla_{\gamma_i}^2 L(\gamma_i) &= \pdv{\lambda_i}{\gamma_i}\nabla_{\lambda_i}^2L(\theta, \lambda_1, \ldots, \lambda_m)\pdv{\lambda_i}{\gamma_i} + \sum_{k=1}^d\left[\pdv{L}{\lambda_i}\right]_k\nabla_{\gamma_i}^2\lambda_i(\gamma_i),
\end{align}
where we have used $\lambda_i(\gamma_i) = \nabla A^*(\gamma_i)$ to denote the backwards map. Near the optimum we have $\smash{\pdv{L}{\lambda_i}\approx \mathbf{0}}$, and so the curvature will be approximately equal to the first term as we approach the optimum. We have already shown that
\begin{align}
    \nabla_{\lambda_i}^2L(\theta, \lambda_1, \ldots, \lambda_m) &\approx (1-\beta_i^{-1})\pdv{\mu_i}{\eta_i} = (1-\beta_i^{-1})\nabla A(\eta_0 + \sum_j\lambda_j),
\end{align}
and so the curvature with respect to $\gamma_i$ is
\begin{align}
    \nabla_{\gamma_i}^2 L(\gamma_i) &\approx (1-\beta_i^{-1})\pdv{\lambda_i}{\gamma_i}\nabla^2 A(\eta_0 + \sum_j\lambda_j)\pdv{\lambda_i}{\gamma_i} \nonumber\\
    &= (1-\beta_i^{-1})\pdv{\lambda_i}{\gamma_i}\nabla^2 A(\eta_0 + \sum_j\lambda_j)\left(\nabla^2 A(\lambda_i)\right)^{-1}.
\end{align}
Compare this with the implicit curvature used by SNEP of $\smash{\pdv{\lambda_i}{\gamma_i}}$, and it is clear that the two are proportional only when $\smash{\eta_0 + \cramped{\sum}_j\lambda_j = \lambda_i}$. This suggests that if there are many sites ($m$ is large), or if the prior is very informative, the curvature matrix implicity used by SNEP is likely to be significantly different from the true curvature, even near the optimum.
\section{Computational cost analysis}
\label{app:computationalcost}

Let $c_\text{samp}$ be the cost of drawing a single sample from one of the tilted distributions. Let $c_\text{fwd}$ be the cost of converting from natural to mean parameters in the base family $\mathcal{F}$ (the forward mapping). Similarly, let $c_\text{bwd}$ be the cost of converting from mean to natural parameters in $\mathcal{F}$ (the backward mapping). Recall that $m$ is the number of sites. We now state the computational complexity of a round of parallel updates in each of the EP variants evaluated in this paper.

\paragraph{EP} EP draws $n_\text{samp}$ samples from each of the $m$ sites. It then performs $m$ backward mappings, to map from moments back to updated site parameters. The total cost is then $mn_\text{samp}c_\text{samp} + mc_\text{bwd}$

\paragraph{EP-$\boldsymbol{\eta}$} EP-$\eta$ draws $n_\text{samp}$ samples from each of the $m$ sites. Each update also involves one forward mapping per iteration. In addition, each iteration also requires $m$ JVPs through the backward mapping $\nabla A^*(.)$. The primals of this JVP are the same for each site, so the linearisation only needs to be performed once per update, costing $
\approx 2c_\text{bwd}$, but we have $m$ tangents. The overall cost of an EP-$\eta$ iteration is therefore $\approx mn_\text{samp}c_\text{samp} + c_\text{fwd} + (2 + m)c_\text{bwd}$.

\paragraph{EP-$\boldsymbol{\mu}$} EP-$\mu$ draws $n_\text{samp}$ samples from each of the $m$ sites. Each update involves one forward mapping, and $m$ backward mappings per iteration, but notably does \emph{not} require any JVPs. The overall cost of an EP-$\mu$ iteration is therefore $mn_\text{samp}c_\text{samp} + c_\text{fwd} + mc_\text{bwd}$.

\paragraph{SNEP} SNEP draws $n_\text{samp}$ samples from each of the $m$ sites, and performs $m$ backward mappings per iteration. The total cost is then $mn_\text{samp}c_\text{samp} + mc_\text{bwd}$, which is the same as EP.

In this work we largely assume that $c_\text{samp}$ is the dominant cost. This is often the case when the sampling is performed using MCMC. For example, in the NUTS sampler with default numpyro settings -- as used in our evaluation -- each sample can involve evaluating up to 1024 gradients of the tilted distribution log density.

The other costs, $c_\text{fwd}$ and $c_\text{bwd}$, depend on $\mathcal{F}$. In the case of a MVN family with dense covariance, the foward and backward mappings are $O(d^3)$, for $d$ the dimensionality of $z$. When $d$ is large, $c_\text{fwd}$ and $c_\text{bwd}$ become more significant, in which case the balance will shift in favour of using more samples per update (and taking larger steps). When $d$ is very large, however, it may be necessary to choose a diagonal covariance base family instead, in which case the mappings can be performed in $O(d)$ time. Also note that when $d$ increases, $c_\text{samp}$ will also typically increase at a rate faster than $d$, even with optimal tuning \citep{brooks2011handbook}. In our evaluation we used just a single sample per update for EP-$\eta$ and EP-$\mu$, and they significantly outperformed the baselines when measure in both NUTS steps \emph{and} wall-clock-time.

We also note that the per-iteration cost of $c_\text{fwd}$ and $c_\text{bwd}$ could be reduced by exploiting the results of previous iterations. For example, in the case of a MVN with dense covariance, the cost of the forward and backward mappings are dominated by matrix inversions. This cost could be significantly reduced by using the result from the previous iteration to warm-start a Newton-style iterative inversion routine. This approach was used by \citet{anil2021shampoo} to reduce the cost of computing inverse matrix roots as part of a wider deep-learning optimisation scheme. We did not make use of such optimisations in our experiments.
\section{Evaluation details}
\label{app:evaluationdetails}

In this appendix, we give details about the experiments presented in Section \ref{sec:evaluation}. To evaluate the performance of the different variants, we monitored KL divergence to an estimate of the optimum, obtained by running EP to convergence with a large number of samples ($n_\text{samp}=10^5$ for the final iterations). We used 500 different hyperparameter settings for each variant, chosen using random search, and repeated each run using 5 different random seeds, which were used to seed the MCMC samplers. Hyperparameter settings that failed in any of the 5 runs were discarded. We used this setup for all experiments in Section \ref{sec:evaluation}.

The $x$-axis in Figure \ref{fig:frontiers} of Section \ref{sec:evaluation} corresponds to the number of NUTS steps, or more specifically, the number of NUTS candidate evaluations. This number is hardware and implementation agnostic, and roughly corresponds to the total computational cost incurred up to that point. Wall-clock time results are given in Appendix \ref{app:wallclock_time_results}.

\paragraph{Hyperparameter search spaces} The step size for EP, $\alpha$, was drawn log-uniformly in the range $(10^{-4}, 1)$, and $n_\text{samp}$ was drawn log-uniformly between $[d+2.5, 10000.5)$ and then rounded to the nearest integer, where $d$ is the dimensionality of $z$. Note that the debiasing estimator used by \citet{xu2014sms} for MVN families is not defined for $\smash{n_\text{samp}\leq d+2}$. The thinning ratio was drawn uniformly from $\{1, 2, 3, 4\}$. The step size for EP-$\eta$ and EP-$\mu$, $\epsilon$, was drawn log-uniformly in the range $(10^{-5}, 10^{-2})$, and $n_\text{samp}$ was fixed to $1$. The step size for SNEP was drawn log-uniformly in the range $(10^{-5}, 10^{-2})$, with $n_\text{samp}$ and $n_\text{inner}$ (independently) drawn log-uniformly in the range $[.5, 10.5)$ and then rounded to the nearest integer.

\paragraph{Double-loops} All the methods considered can be used in a double-loop manner by taking $n_\text{inner} > 1$ inner steps per outer update. We did not find this necessary in any of our experiments, and so we fixed $n_\text{inner} = 1$ for EP, EP-$\eta$ and EP-$\mu$. For SNEP we included $n_\text{inner}$ in the search space (as described above) to ensure that setting it to $1$ was not negatively impacting its performance.

\paragraph{Hardware} All experiments were executed on 76-core Dell PowerEdge C6520 servers, with 256GiB RAM, and dual Intel Xeon Platinum 8368Q (Ice Lake) 2.60GHz processors. Each individual run was assigned to a single core. 

\paragraph{Software} Implementations were written in JAX \citep{frostig2018jax}, with NUTS \citep{hoffman2014nuts} used as the underlying sampler. We used the numpyro \citep{phan2019numpyro} implementation of NUTS with default settings. For experiments with a NIW base family, we performed mean-to-natural parameter conversions using the method of \citet{so2024niw}, with JAXopt \citep{blondel2021jaxopt} used to perform implicit differentiation through the iterative solve.

\paragraph{MCMC hyperparameter adaptation} We performed regular warm-up phases in order to adapt the samplers to the constantly evolving tilted distributions, consistent with prior work \citep{xu2014sms,vehtari2020}. The frequency and duration of these warm-up phases was also included in the hyperparameter search as follows. We drew the duration (number of samples) of each warm-up phase log-uniformly in the range $[99.5, 1000.5)$ and then rounded to the nearest integer. We then drew the sampling-to-warm-up ratio log-uniformly in the range $(1, 4)$. This ratio then determined how frequently the warm-up was performed, which we rounded to the nearest positive integer number of updates. Note that the Pareto frontier plots in Figures \ref{fig:frontiers} and \ref{fig:frontiers_wallclock} include the computation / time spent during warm-up phases.

\paragraph{Site parameter initialisation} For EP, EP-$\eta$ and EP-$\mu$, we initialised the site parameters to $\lambda_i = \mathbf{0}\ \forall\ i$ for all models, with the exception of the cosmic radiation model for reasons we discuss later. SNEP is not compatible with improper site potentials, and so for models where $\mathcal{F}$ was MVN we used $\lambda_i = (2m)^{-1}\eta_0 \ \forall\ i$, consistent with \citet{vehtari2020}. Unfortunately this initialisation strategy for SNEP is not always valid when $\mathcal{F}$ is NIW. This is because the site potentials in SNEP are restricted to being proper distributions, and this constraint was violated when using this initialisation in our experiments. We tried various other initialisation strategies, but none produced satisfactory results or allowed us to make a meaningful comparison, hence we omitted SNEP from the experiments with NIW $\mathcal{F}$; namely, the political survey hierarchical logistic regression and neural response model experiments.

\begin{figure}[ht]
    \centering
    \begin{tikzpicture}[->]
    
      \node[obs] (d) {$\mathcal{D}$};
      \node[latent, left=2cm of d] (w) {$w$};
      \node[latent, left=2cm of w] (z) {$z$};
    
      \edge {z} {w};
      \edge {w} {d};
      \path (z) edge [bend left] (d);
    
      \plate {zd} {(w)(d)} {$i=1, \ldots, m$} ;
    
    \end{tikzpicture}
    \caption{Directed graphical model for the experiments of Section \ref{sec:evaluation}.}
    \label{fig:hierarchical_model}
\end{figure}

We set $\beta_i=1\ \forall\ i$ in all experiments, corresponding to the original (not power) EP of \citet{minka2001family}. All of the models in these experiments followed the same general structure, given by
\begin{gather}
    p_0(z)\prod_i p(w_i\mid z)p(\mathcal{D}_i\mid w_i, z),
\end{gather}where $p_0$ is a member of a tractable, minimal exponential family $\mathcal{F}$. This is shown graphically in Figure \ref{fig:hierarchical_model}. We now provide details about individual experiments.

\subsection{Hierarchical logistic regression}

In the hierarchical logistic regression (HLR) experiments, there were $m$ groups (logistic regression problems) each with $n$ covariate/response pairs, so that $\mathcal{D}_i= \{(x_{i,j}, y_{i,j})\}$, $x_{i,j}\in\mathbb{R}^{d}$, and $y_{i,j}\in\{0,1\}$, for $i=1$ to $m$, and $j=1$ to $n$. Each group had its own unobserved vector of regression coefficients $w_i\in\mathbb{R}^d$, with density $p(w_i\mid z)$ parameterised by global parameters $z$.

\paragraph{Synthetic data with MVN prior} In the synthetic experiment, we had $m=16$ groups, with $d=4$ and $n=20$. $\mathcal{F}$ was an MVN family, and $z = (\mu_1,  \log\sigma_1^2, ..., \mu_4, \log\sigma_4^2) \in \mathbb{R}^{8}$ corresponded to the means and log-variances for the independent normal density $\smash{p(w_i\mid z) = \cramped{\prod}_j\mathcal{N}(w_{i,j}\mid \mu_j, \sigma_j^2)}$.\footnote{Note that in this section we use $\mu$ to denote the mean of a normal or multivariate normal distribution. This should not to be confused with the usage for exponential family mean parameters in the main text.} The prior on $z$ had density $p_0(z) = \mathcal{N}(\mathbf{0}, \text{diag}\bm{(}(4, 2, 4, 2, \ldots)\bm{)})$, where $\text{diag}(.)$ constructs a diagonal matrix from its vector-valued argument. The dataset was generated using the same procedure as \citet{vehtari2020}.

\paragraph{Political survey data with NIW prior} In the political survey data experiment, $\mathcal{F}$ was a normal-inverse-Wishart (NIW) family of distributions, and $z = (\mu, \text{vech}(\Sigma))$ was used to parameterise the group-level coefficient density $p(w_i\mid z) = \mathcal{N}(w_i\mid \mu, \Sigma)$. $p_0$ was a NIW prior on $\mu$ and $\Sigma$, with parameters $\mu_0=\mathbf{0}$, $\nu=9$, $\lambda=1$, and $\Psi=I$ (following Wikipedia notation). We used a log-Cholesky parameterisation of $\Sigma$ for the purposes of sampling. The dataset consisted of binary responses to the statement \textit{\enquote{Allow employers to decline coverage of abortions in insurance plans (Support / Oppose)}}. We constructed $m=50$ regression problems, corresponding to the $50$ US states, and truncated the data so that there were exactly $n=97$ responses for each state. We used 6 predictor variables from the dataset, corresponding to characteristics of a given survey participant. The predictors were binary variables conveying: age (3 groups), ethicity (2 groups), education (3 groups) and gender. We also included a state-level intercept to capture variation in the base response level between states, so that $d=7$. This setup is based on one used by \citet{lopezmartin2022mrp}. The data is available at \url{https://github.com/JuanLopezMartin/MRPCaseStudy}.

\subsection{Cosmic radiation model}

In this experiment, there were $m$ nonlinear regression problems, each corresponding to a model of the relationship between diffuse galactic far ultraviolet radiation (FUV) and 100-µm infrared (i100) emission in a particular sector of the observable universe.

The nonlinear regression model for sector $i$ had parameters $w_i\in\mathbb{R}^9$. The $m$ regression problems were related through common hyperparameters $z\in\mathbb{R}^{18}$, which parameterised the section-level densities $p(\mathcal{D}_i, w_i\mid z)$ where $\mathcal{D}_i$ is the observed data for sector $i$. $\mathcal{F}$ was the family of MVN distributions, and the prior on $z\in\mathbb{R}^{18}$ had density $p_0(z) = \mathcal{N}(\mathbf{0}, 10I)$. The specifics of $p(\mathcal{D}_i, w_i\mid z)$ are quite involved and we refer the reader to \citet{vehtari2020} for further details.

\citet{vehtari2020} applied this model to data obtained from the Galaxy Evolution Explorer telescope.
We were unable to obtain the dataset, and so we generated synthetic data using hyperparameters that were tuned by hand to try and match the qualitative properties of the original data set (see Appendix \ref{app:cosmicradiationdata} for examples). We used a reduced number of $m=36$ sites and $n=200$ observations per site to reduce computation, allowing us to perform a comprehensive hyperparameter search.

Using the conventional site parameter initialisation of $\lambda_i = \mathbf{0}\;\forall\;i$ resulted in most runs of EP failing during early iterations. We found that this was resolved by initialising with the method used by \citet{vehtari2020} for SNEP, that is, $\lambda_i = (2m)^{-1}\eta_0\;\forall\;i$, and so we used this initialisation for all methods. We note, however, that the performances of EP-$\eta$ and EP-$\mu$ were largely unaffected by this change.

\subsection{Neural response model}

In this experiment we performed inference in a hierarchical Bayesian neural response model, using recordings of V1 complex cells in an anaesthetised adult cat. 10 neurons in a specific area of cat V1 were simultaneously recorded under the presentation of 18 different visual stimuli, each repeated 8 times, for a total of 144 trials. Neural data were recorded by Tim Blanche in the laboratory of Nicholas Swindale, University of British Columbia, and downloaded from the NSF-funded CRCNS Data Sharing website \url{https://crcns.org/data-sets/vc/pvc-3} \citep{blanche2009cat}.

$z = \bm{(}\mu, \text{vech}(\Sigma)\bm{)} \in \mathbb{R}^{65}$ was used to parameterise $\mathcal{N}(\log r_j; \mu, \Sigma)$, the density for latent log firing rates in trial $j$, for $j=1, \ldots, 144$. $p_0\in\mathcal{F}$ was NIW with parameters $\mu_0=\mathbf{1}$, $\nu=12$, $\lambda=2.5$, and $\Psi=1.25I$ (again following Wikipedia notation). The observed spike count for neuron $k$ in trial $j$, $x_{j,k}\in\mathbb{N}$, was modelled as $\text{Poisson}\bm{(}c_{j,k}, \exp(r_{j,k})\bm{)}$, for $j=1, \ldots, 144$, $k=1, \ldots, 10$. We again used a log-Cholesky parameterisation of $\Sigma$ for sampling.

We grouped trials together into $m=8$ batches of $n=18$ trials, so that $\smash{w_i = (\log r_{180(i-1)+1}, \ldots, \log r_{180i}) \in \mathbb{R}^{180}}$ was the concatenation of log firing rates for all $10$ neurons across $18$ trials, with $\mathcal{D}_i = \{(x_{j,1}, \ldots, x_{j,10})\}$ the observed spike counts, for $j=180(i-1)+1$ to $180i$.
\section{Hyperparameter sensitivity}
\label{app:hyperparameter_sensitivity}

In this appendix, we examine the effect of varying hyperparameters of EP and EP-$\eta$ on the synthetic hierarchical logistic regression experiment of Section \ref{sec:evaluation}. The results for EP-$\mu$ are similar to those of EP-$\eta$ and so we do not consider it separately here.

\paragraph{EP} In the left panel of Figure \ref{fig:epsensitivity} we show the effect of varying the number of samples used for estimating updates in EP -- more accurate regions of the frontier generally require more samples. The middle panel, similarly, shows the effect of varying the step size, and the right panel shows the effect of varying the thinning ratio $\tau$. Together these plots illustrate the difficulty of tuning EP in stochastic settings -- tracing out the frontier is a three-dimensional problem. For example, to achieve the best accuracy within a compute budget of $10^7$ steps, the practitioner would need to set $400 < n_\text{samp} \leq 500$, $0.1 < \alpha \leq 0.3$, and $\tau = 2$, and any deviation from this would seemingly result in suboptimal accuracy.

\begin{figure}[ht]
    \centering
    \includegraphics[]{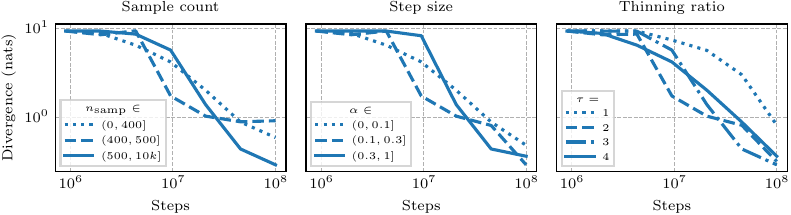}  
    \caption{The effect of varying EP hyperparameters. Partial Pareto frontiers show the number of NUTS steps ($x$-axis) against the KL divergence from $p$ to an estimate of the optimum ($y$-axis).}
    \label{fig:epsensitivity}
\end{figure}

\paragraph{EP-$\boldsymbol{\eta}$} In the left panel of Figure \ref{fig:epetasensitivity} we show the effect of varying the number of samples used for estimating updates in EP-$\eta$ -- the frontier is traced out with $n_\text{samp} = 1$. We note however that the difference between $n_\text{samp}=1$ and $n_\text{samp}=10$ is relatively small, and so it may be sensible to choose $n_\text{samp}>1$ to make efficient use of parallel hardware, or to minimise per-iteration overheads (see Appendix \ref{app:computationalcost}). The middle panel, similarly, shows the effect of varying the step size, with more accurate regions of the frontier corresponding to a smaller step size. This also suggests that in practice it may make sense to set $\epsilon$ relatively large at first and then gradually decay it to improve the accuracy. Finally, the right panel shows the effect of varying $n_\text{inner}$, the number of inner steps performed per outer update. $n_\text{inner}>1$ corresponds to using EP-$\eta$ in \enquote{double-loop} mode. We did not find it necessary to increase $n_\text{inner}$ above one to obtain convergence in our experiments, but Figure \ref{fig:epetasensitivity} (right) demonstrates that doing so would have a relatively small impact on performance. Together these plots illustrate that hyperparameter tuning for EP-$\eta$ is relatively straightforward. The frontier can be largely be traced out by varying $\epsilon$, and is relatively insensitive to $n_\text{samp}$ and $n_\text{inner}$.

\begin{figure}[h]
    \centering
    \includegraphics[]{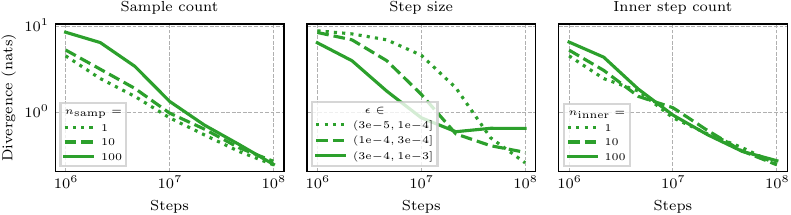}
    \caption{The effect of varying EP-$\eta$ hyperparameters. Partial Pareto frontiers show the number of NUTS steps ($x$-axis) against the KL divergence from $p$ to an estimate of the optimum ($y$-axis).}
    \label{fig:epetasensitivity}
\end{figure}

\section{Pareto frontiers in wall-clock time}
\label{app:wallclock_time_results}

In Figure \ref{fig:frontiers_wallclock}, we show Pareto frontiers, with the y-axis showing the lowest KL divergence achieved by any hyperparameter setting, and the x-axis shows the cumulative number
of seconds elapsed -- that is, the wall-clock time equivalent of Figure \ref{fig:frontiers}. These results are in broad agreement with those of Figure \ref{fig:frontiers}, which suggests that the sampling cost does indeed dominate the computational overheads of EP-$\eta$ and EP-$\mu$ in these experiments. The wall-clock time results for EP-$\eta$ and EP-$\mu$ would likely be improved by using more than one sample per update, by making more efficient use of hardware resources and minimising per-iteration overheads.

We note that these times are necessarily implementation and hardware dependent. We did not make particular efforts to optimise for wall-clock time. In Appendix \ref{app:computationalcost} we discuss approaches for minimising per-iteration overheads. These would likely improve wall-clock time performance for all methods, but should disproportionately favour EP-$\eta$ and EP-$\mu$, due to their frequent updates and larger per-iteration overheads compared to EP and SNEP.

\begin{figure}[ht]
    \centering
    \includegraphics[]{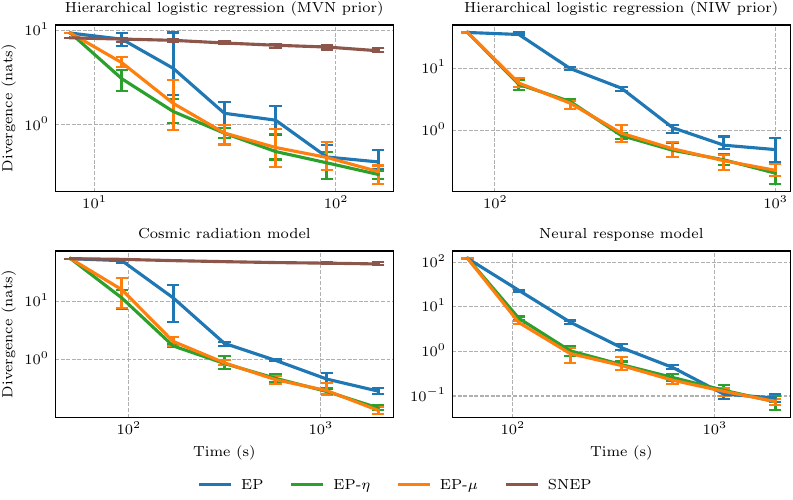}
    
    \caption{Pareto frontiers showing the number of seconds elapsed ($x$-axis) against the KL divergence from $p$ to an estimate of the optimum ($y$-axis). Each point on the plot marks the lowest KL divergence attained by \emph{any} hyperparameter setting by that time. Error bars mark the full range of values for the marked hyperparameter setting across 5 random seeds.}
    \label{fig:frontiers_wallclock}
\end{figure}
\newpage
\section{Comparison with conjugate-computation variational inference (CVI)}
\label{app:cvicomparison}

Our focus in this work has been on developing improved methods for performing EP when the updates must be estimated with noise. A major motivation for this setting is that it potentially enables EP to be used as a so-called \emph{black-box} inference method, significantly expanding the set of models it can be applied to. EP has several computational advantages over direct MCMC approaches, the prevailing dominant class of black-box mehods, as discussed by earlier works \citep{xu2014sms,hasenclever2017,vehtari2020}. There is another popular class of black-box inference methods however; namely, those of \emph{variational inference} (VI).\footnote{Expectation propagation is also a variational inference method, but we follow convention here by using variational inference to refer specifically to variational optimisation of the free energy / evidence lower bound.} In order to understand the trade-offs involved in using EP and its variants over VI, we performed experiments to explore the relative strengths and weaknesses of EP-$\eta$ and conjugate-computation variational inference (CVI) \citep{khan2017cvi}, an efficient VI method.

\paragraph{Experimental setup} We used EP-$\eta$ and CVI to perform approximate Bayesian inference using the same hierarchical logistic regression model (with MVN prior) and synthetic dataset as Section \ref{sec:evaluation}. For our evaluation metrics, we used both forward and reverse KL divergences between the current (MVN) approximation, and a MVN \enquote{target} distribution. The target distribution was a MVN distribution fitted using 500,000 samples from the posterior, obtained by running NUTS for 1 million samples and then discarding the first half as a warm-up. For CVI, we used a structured MVN approximation that had the same conditional independence structure as the true posterior -- that is, it modelled all pairwise dependencies within $z$, within each $w_i$, and between $z$ and each $w_i$, but did not model direct dependence between $w_i, w_j$ for $i \neq j$. We used 500 random hyperparameter settings for each method, using the search spaces described below. Unless stated otherwise (below), all other details were as described in Appendix \ref{app:evaluationdetails}.

\paragraph{Hyperparameter search spaces} For EP-$\eta$ we used the same hyperparameter search space as described in Section \ref{app:evaluationdetails}. For CVI, the step size was drawn log-uniformly in the range $(10^{-5}, 1)$, and the number of samples used to estimate the update was drawn log-uniformly in the range $[.5, 100.5)$ and then rounded to the nearest integer. The site parameters for CVI were initialised to be the parameters of a zero-mean MVN distribution with scaled identity covariance matrix, with the scale drawn log-uniformly from $(10^{-5},10^5)$.

\paragraph{Wall-clock time performance comparison} We compared the wall-clock time performance of EP-$\eta$ and CVI, with the former using NUTS as the underlying sampler. For both methods, we chose the hyperparameters that gave the lowest \emph{reverse} divergence between the approximation and the MCMC target distribution after 100 seconds. The results of this experiment are shown in the left panel of Figure \ref{fig:cvi_comparison}. We can see that EP-$\eta$ converges at least one order of magnitude slower than CVI on this problem, when measured in wall-clock time. This difference is largely due to the relative cost of drawing samples for the two methods. While CVI also uses samples to estimate its updates, it uses samples from the current MVN approximation, which are cheap to generate using standard methods. EP-$\eta$ on the other hand (as with other EP variants), requires samples from the \emph{tilted} distributions to estimate the updates, and in this experiment we used NUTS to draw those samples. Each NUTS sample can involve many evaluations of the energy function gradient, and furthermore, because MCMC samples are generally autocorrelated, the cost of drawing approximately \emph{independent} samples can be even higher. We used NUTS to be consistent with prior work, and because our focus in this work has primarily been the relative performance of different EP variants, we did not make particular efforts to improve the performance of the underlying samplers, beyond tuning their hyperparameters in a fairly standard manner. However, all of the EP variants considered in this paper, including EP-$\eta$, are agnostic to the choice of sampling kernel used, and we believe there is scope to significantly improve the efficiency of the underlying samplers, for reasons we briefly discuss in Section \ref{sec:futurework}.

\paragraph{Sample efficiency} In order to decouple the performance of EP-$\eta$ from that of the underlying sampler, we repeated the experiment using an \enquote{oracle} sampling kernel for EP-$\eta$. This oracle kernel was simply NUTS but with a thinning ratio of 100, so that each sample was an approximately independent sample from the tilted distribution. We then compared EP-$\eta$ and CVI using the same metrics as the previous experiment, but with \enquote{time} measured in samples. We chose the hyperparameter setting for each method that achieved the lowest reverse KL divergence after 1,500 samples. When measured on this basis we see that the convergence speeds of CVI and EP-$\eta$ are roughly equivalent, demonstrating that the sample efficiency of the two methods are similar. This should not be too surprising, as CVI is (mathematically, but not algorithmically) equivalent to the limiting case of power EP (as $\beta_i \rightarrow \infty$) \citep{bui2018pvi,wilkinson2023bayesnewton}.

\paragraph{Approximation quality} Figure \ref{fig:cvi_comparison} demonstrates that EP-$\eta$ is able to achieve a more faithful approximation of the target distribution when measured by either the forward \emph{or} reverse KL. At first it may seem surprising that EP is able to obtain a lower \emph{reverse} KL divergence than CVI, but there are two reasons for this apparent anomaly. First, with EP, we have a MVN approximation for $z$ only, with the approximate posterior over local variables ($w_i$ for $i=1, \ldots, m$) represented only implicitly with samples. CVI on the other hand necessarily optimises a joint MVN over \emph{all} variables, and the $z$ marginal of the optimal VI posterior over all variables is not the same as the optimal VI posterior over $z$. The second reason is that our target distribution is essentially a moment-matched MVN approximation of the true posterior, which would naturally tend to favour the forward KL divergence that is (approximately) targeted by EP. Nevertheless, a qualitative assessment of the pairwise marginals, as seen in the two rightmost panels of Figure \ref{fig:cvi_comparison}, shows that the \emph{true} posterior is more faithfully approximated by EP-$\eta$ than CVI, with the latter significantly underestimating uncertainty in the inter-group variability parameters, $\{\log\sigma_i\}_i$. The full set of pairwise posterior marginals can be seen in Figure \ref{fig:cvi-comparison-pairwise-marginals}. Note that the CVI approximation appears more constrained than might be expected based on the marginal plots alone. This is due to the zero-avoiding nature of the reverse KL divergence -- non-Gaussianity in one variable can constrain the marginal distributions of others, particularly when there is strong dependence. The local variables, $\{w_i\}$, can have both significant non-Gaussian posterior structure, and strong dependence on $\{\log\sigma_i\}_i$.

\begin{figure}[ht]
    \centering
    \includegraphics{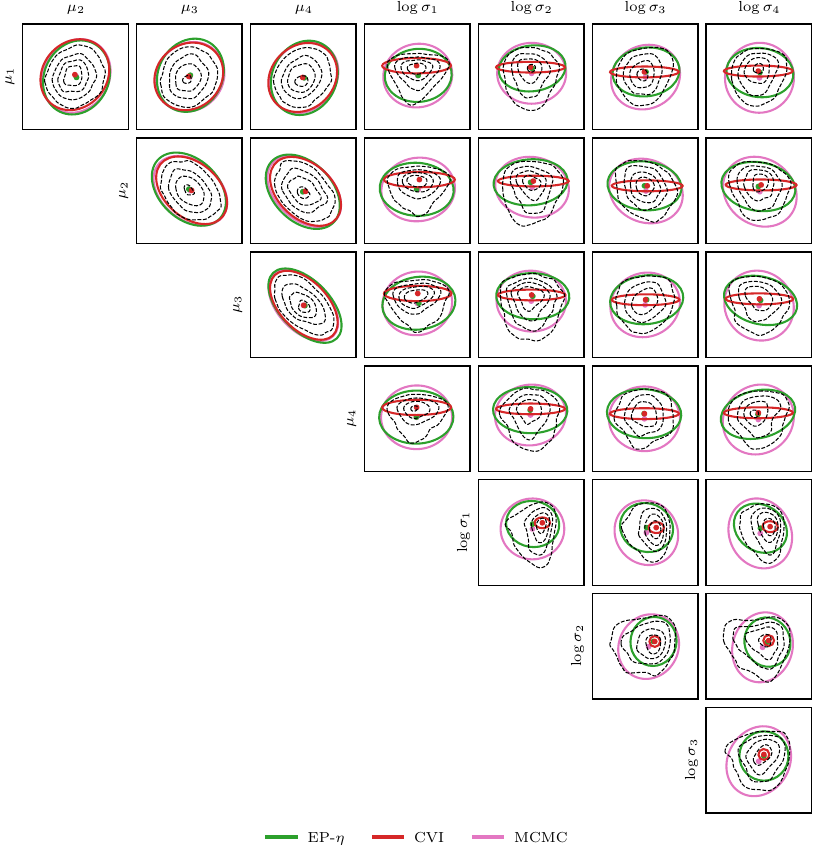}
    \caption{Contours of pairwise posterior marginals for the hierarchical logistic regression experiment of Section \ref{sec:evaluation}, overlaid with the MVN approximations of EP-$\eta$, CVI, and one obtained by fitting directly to MCMC samples.}
    \label{fig:cvi-comparison-pairwise-marginals}
\end{figure}
\section{Cosmic radiation data}
\label{app:cosmicradiationdata}

\citet{vehtari2020} used a hierarchical Bayesian model to capture the nonlinear relationship between diffuse galactic far ultraviolet radiation (FUV) and 100-µm infrared emission (i100) in various sectors of the observable universe, using data obtained from the Galaxy Evolution Explorer telescope. We were unable to obtain the dataset, and so we generated synthetic data using hyperparameters that were tuned by hand to try and match the qualitative properties of the original data set. Example data generated using these hyperparameters is shown in Figure \ref{fig:cosmic_radiation_data}. For comparison, see Figure 9 of \citet{vehtari2020}.

\begin{figure}[ht]
    \centering
    \includegraphics{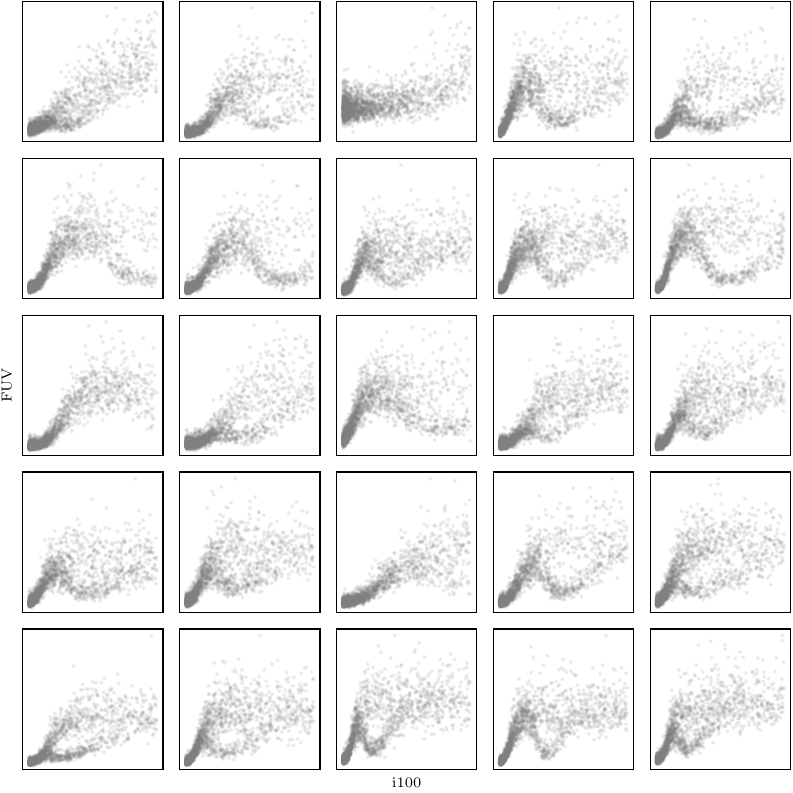}
    \caption{Synthetic data, generated using the cosmic radiation model of \citet{vehtari2020}. Each plot shows galactic far ultraviolet radiation (FUV) versus infrared radiation (i100) for a single sector of the observable universe.}
    \label{fig:cosmic_radiation_data}
\end{figure}

\end{document}